\PassOptionsToPackage{nosubfloats}{jmlrutils}

\documentclass{template}
\usepackage{graphicx}
\usepackage{booktabs,array}
\usepackage{tikz}
\usetikzlibrary{arrows.meta,decorations.markings,calc}
\usepackage{cuted}
\usepackage{caption}
\usepackage{subcaption}
\usepackage{wrapfig}
\usepackage{ragged2e}
\usepackage{bbm}
\usepackage{etoc}
\usepackage{float}

\newtheorem{assumption}{Assumption}
\newcommand{\qed}{\hfill$\Box$}

\newcommand{\eps}{\varepsilon}

\usepackage{color}

\AtBeginDocument{%
  \setlength{\parindent}{0pt}
  \setlength{\parskip}{0.5\baselineskip}
}

\AtBeginDocument{%

  \setcounter{topnumber}{4}
  \setcounter{bottomnumber}{2}
  \setcounter{totalnumber}{6}
}

\makeatletter
\AtBeginDocument{%
  \jmlrproceedings{}{}%
  \jmlrvolume{}%
  \jmlrissue{}%
  \jmlryear{}%
  \jmlrpages{}%
  \jmlrworkshop{}%
  \jmlrsubmitted{}%
  \jmlrpublished{}%
  \editor{}%

  \let\@titlefoot\@empty
  \let\ps@jmlrtps\ps@plain
}
\makeatother

\title{Statistical Vignettes on the Grokking Phenomenon}

\author{%
Baye Malick Gning$^{1,2}$\\
Elvis Dohmatob$^{1,2}$\\
    \addr{$^1$Concordia University}\\
  \addr{$^2$Mila -- Quebec AI Institute}
}

\begin{document}
\etocdepthtag.toc{main}

\maketitle
\begin{abstract}
In this work, we study grokking in a simple, analytically tractable setting whose rich phenomenology allows us to disentangle the effects of initialization, signal strength, weight decay, sample size, and input dimension. These factors can produce superficially similar learning curves through distinct mechanisms, complicating their interpretation. Our analysis distinguishes early fitting through nuisance features, core feature learning, and delayed generalization. We identify sharp conditions under which grokking provably occurs and derive analytic asymptotic expressions for the lengths of the generalization plateaus. Our results also characterize regimes of early generalization and failure to reach a prescribed test accuracy target. We show that the interpolation edge constitutes a distinct dynamical regime, where vanishing spectral gaps separate core learning from generalization, helping clarify the role of proximity to interpolation in existing accounts of grokking. The resulting picture is analogous to fast and slow phases in chemical kinetics and suggests that the lazy-to-rich transition discussed in the literature may reflect a similar separation of dynamical time scales. Our theoretical results are faithfully supported by extensive experiments.
\end{abstract}

\section{Introduction}
\label{sec:intro}
Grokking describes the delayed emergence of generalization after
a model has already achieved high training accuracy. Since its
discovery on algorithmic tasks \citep{Power2022Grokking}, it has
been associated with representation learning, regularization,
initialization, and proximity to interpolation. These perspectives
raise a common question: what determines the delay between fitting
the training data and generalizing? Answering this question requires
understanding both optimization and statistics. High training
accuracy, learning a useful feature, and attaining high test
accuracy are distinct events, and similar learning curves can
conceal different reasons for their separation.

We investigate these questions in a simple, analytically tractable
setting with a planted core feature and a model that is bilinear
on the core dimension. Despite its simplicity, this setting
exhibits rich phenomenology, allowing us to disentangle the
optimization dynamics of core learning and nuisance fitting
from their statistical consequences for generalization. Our
analysis identifies how initialization, signal strength,
regularization, and sample-to-dimension ratio jointly determine
whether and when grokking occurs. In particular, it quantifies
grokking without weight decay and reveals the distinctive role
of the interpolation edge: even when the core coefficient
approaches its desired value, residual nuisance contributions
can delay or prevent attainment of a prescribed test accuracy.
This provides a unified setting in which to examine several
mechanisms often studied separately.

\paragraph{Contributions.} Our main contributions are as follows:
\begin{itemize}
\item \emph{Quantitative separation of learning time scales.}
We derive the exact gradient flow equations for our model
(Section~\ref{sec:toy-model}) and analyze their fast and slow
dynamics using a singular perturbation perspective. We establish
a logistic approximation for core learning and quantitative
bounds on the relaxation and tracking of nuisance weights.
These results distinguish the regime below interpolation
from the interpolation edge, where a vanishing spectral gap
slows nuisance relaxation and creates a further separation
between core learning and generalization
(Section~\ref{sec:grad-flow}).

\item \emph{Conditions for grokking and plateau lengths.}
We establish a high-dimensional phase diagram separating early
generalization, grokking, and failure to attain a prescribed
test accuracy target. Where grokking provably occurs, we derive
analytic asymptotic expressions for core-learning and
test accuracy hitting times, quantifying the generalization
plateau. The resulting conditions make explicit the roles of
initialization, signal strength, regularization, and dimensionality.
They also clarify the role of weight decay: it is not necessary
for grokking, and excessive regularization can prevent attainment
of the target accuracy
(Section~\ref{sec:grok}).

\item\emph{Experimental validation.}
We verify our predictions by sampling finite training datasets and numerically integrating the corresponding gradient flow. Across sample sizes and regularization regimes, we compare core trajectories, training and test accuracies, and hitting times with their theoretical predictions. The experiments illustrate both the separation of learning time scales below
interpolation and the additional delay in generalization near the interpolation edge. See Figure \ref{fig:experiment_hard_interp}, Figure \ref{fig:experiment_underparams}, and Figure \ref{fig:plateau_scaling}.

Appendix \ref{sec:experiments} contains further empirical validation.

\end{itemize}

\begin{figure*}[!htpb]
    \centering
    \includegraphics[width=1\linewidth]{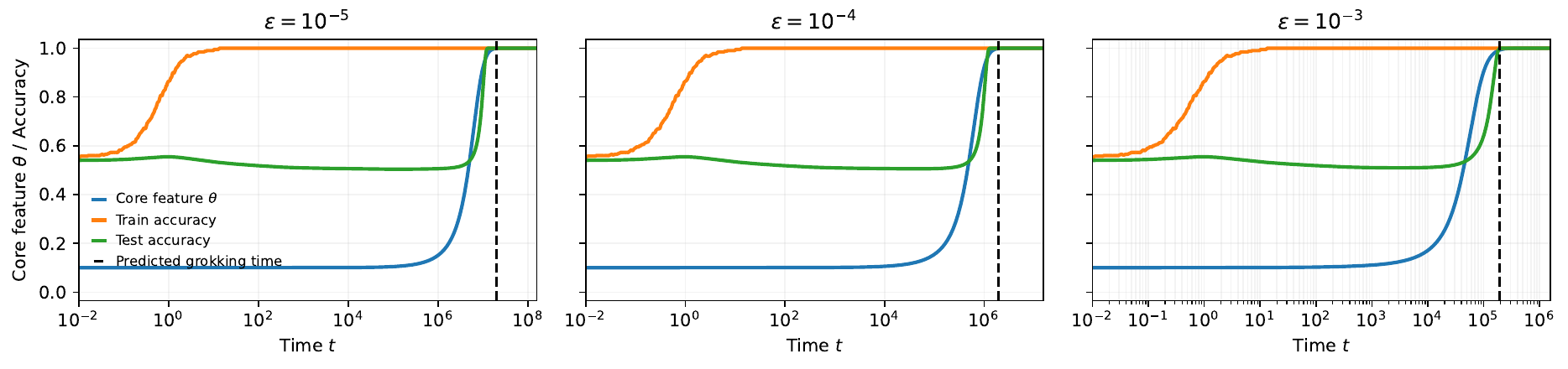}

    \vspace{-.25cm}
\caption{\textbf{Grokking near the interpolation edge ($\gamma=1$).} Experiments for the simple model \eqref{eq:toy-model} trained without regularization / weight decay ($\lambda=0$) in the optimization objective \eqref{eq:obj}, on $n=400$ samples and $d=395$ nuisance features ($d/n \approx 1$) from the data distribution \eqref{eq:distro}, for different values of the signal strength $\varepsilon$. In all cases, the effective weight decay is $b=0<1/z_p$:
the core coefficient $\theta=\varepsilon uv$ approaches $1$, and the test accuracy target $p=95\%$ is attained at a time consistent with the prediction of our theory \eqref{eq:edge-Tdelta}. Under a very small signal, note that when weight decay is active ($\lambda \in \{10^{-4},10^{-5}\}$), $b$ exceeds the threshold, preventing grokking from occurring. See Appendix \ref{sec:experiments} for details and additional experiments.}
    \label{fig:experiment_hard_interp}
\end{figure*}

\begin{figure*}[!htpb]
    \centering
    \begin{subfigure}{\textwidth}
        \centering
        \includegraphics[width=1\linewidth]{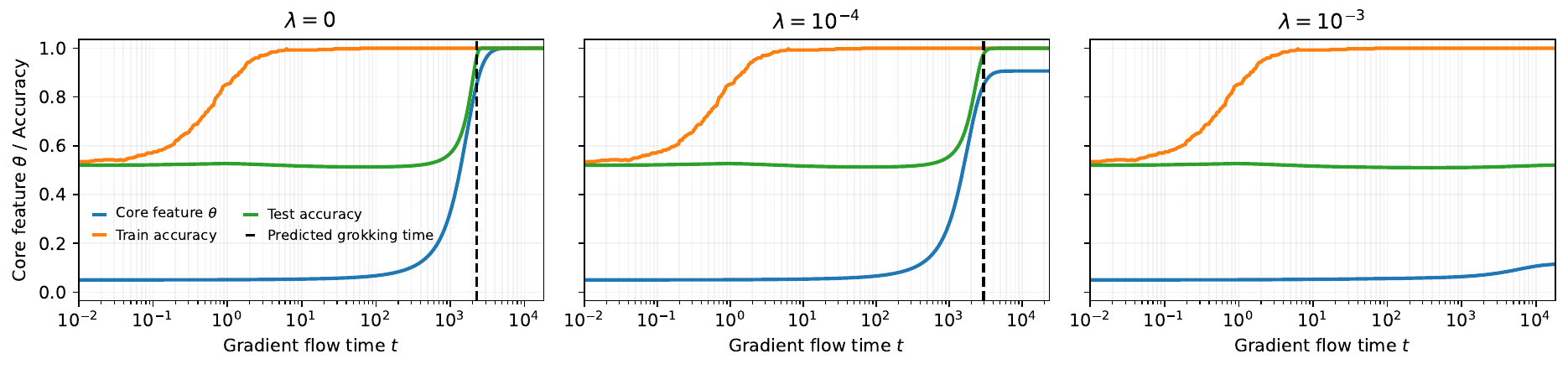}

        \caption{\textbf{Small signal strength $\varepsilon=10^{-2}$.} The vertical lines correspond to our theoretical estimate \eqref{eq:underparam-Tdelta}. For $\lambda=10^{-3}$ (rightmost plot), the effective weight decay is $\beta=1 > 1-\delta_p=0.17$, and our theory (Theorem \ref{thm:big}) correctly predicts that grokking does not occur here, contrary to the left and middle plots which correspond to values of $\beta$ smaller than the critical threshold $1-\delta_p$.}
      \end{subfigure}

      \medskip
    \begin{subfigure}{\textwidth}
        \centering      

        \includegraphics[width=1\linewidth]{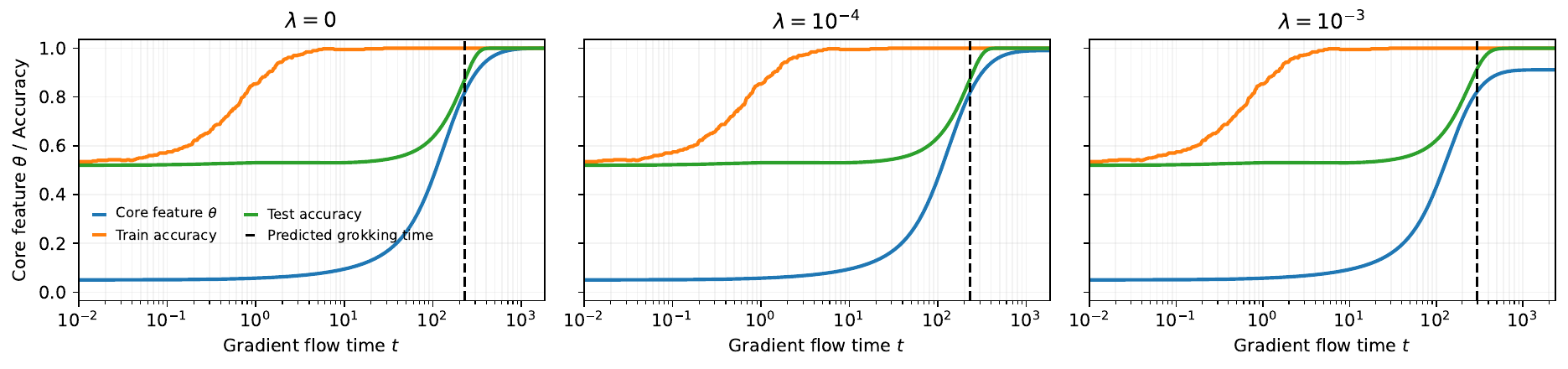}

        \caption{
        \textbf{Moderate signal strength $\varepsilon=10^{-1}$.} Here, the signal is sufficiently strong that grokking occurs even for small and moderately large values of weight decay $\lambda$. Indeed, for all values of $\lambda$ shown, the effective weight decay $\beta$ falls below the critical threshold $1-\delta_p$; our Theorem \ref{thm:big} correctly predicts that grokking occurs in time $T_p \simeq H_{\delta_p}(\beta) \ell_n$ (vertical line).}
    \end{subfigure}
\caption{\textbf{Grokking below interpolation threshold ($\gamma<1$).} Experiments use $n=400$ samples and $d=360$ nuisance features ($\gamma=0.9$). The results support Theorem~\ref{thm:big} across zero, weak and strong weight decay $\lambda$. 
The core coefficient approaches the limit $(1-\beta)_+$ predicted by our Theorem \ref{thm:big}, while weight determines whether the test accuracy target $p=0.95$ is attained. Core convergence and target attainment need not coincide. Whenever the target is reached, the observed hitting time  agrees with the theoretical prediction (vertical line). See Appendix \ref{sec:experiments} for details and more extensive experiments.
}
    \label{fig:experiment_underparams}
\end{figure*}

\section{A Minimal Setup for Grokking}
\label{sec:toy-model}

\subsection{Data Distribution}
We consider a binary classification problem with a planted
core feature and independent nuisance features. A random
input--label pair $(x,y)$ is generated as follows:
\begin{equation}
\boxed{
\begin{aligned}
    &y \sim \operatorname{Unif}(\{-1,1\}),
    &&\text{(\textbf{class label})}\\
    &x^{(1)}=\varepsilon y,
    &&\text{(\textbf{core feature})}\\
    &z\sim\mathcal N(0,I_d),\quad z\perp y,
    &&\text{(\textbf{nuisance features})}\\
    &x=(x^{(1)},z)\in\mathbb R^{d+1}.
    &&\text{(\textbf{input vector})}
\end{aligned}
}
\label{eq:distro}
\end{equation}
The parameter $\varepsilon>0$ controls the amplitude of the
core feature relative to the unit scale of each nuisance
coordinate. For every $\varepsilon>0$, the core feature
determines the label exactly, whereas the nuisance vector
is independent of the label. Thus, reducing $\varepsilon$
weakens the scale at which the signal enters the model
without increasing the Bayes classification error, which
remains zero. We are particularly interested in the
small-signal regime $\varepsilon\ll1$, where this distinction
allows us to separate the availability of predictive
information from the time required to learn to use it. A similar distribution was also considered by \citet{bayatmemorization} in the context out-of-distribution generalization.

The training dataset consists of $n$ labeled samples
$(x_1,y_1),\ldots,(x_n,y_n)$ which are iid copies of the pair $(x,y)$ defined above. Although the corresponding nuisance
features $z_i \in \mathbb R^d$ of each data point $x_i \in \mathbb R^{1+d}$ contain no information about the label $y_i$, their empirical correlations with the training
labels allow them to contribute to fitting the training data (memorization).

\subsection{A Simple Bilinear Model}
We consider  model class
\begin{align}
    f(x)=f(x;u,v,w)=uvx^{(1)}+z^\top w,
    \label{eq:toy-model}
\end{align}
with learnable parameters $u,v\in\mathbb R$ and $w\in\mathbb R^d$.
The model is linear in the input but bilinear in the parameters
associated with the core feature. This factorization $uv$ introduces
nonlinear learning dynamics while retaining a tractable
description of the competition between core learning and
nuisance fitting.

The setup is motivated by sparse parity \citep{barak2022hidden}, a task on which two-layer ReLU networks exhibit grokking
\citep{merrill2023tale}. In this problem, the label
is $y=\prod_{j\in S}x^{(j)}$ for binary inputs
$x \in\{-1,1\}^D$ and a small, unknown subset $S$.
\citet{merrill2023tale} associate delayed generalization with competition between a dense memorizing subnetwork and
a sparse generalizing subnetwork. This motivates our
separation between a predictive core and nuisance features that can fit the training data.
A parity feature admits the gating representation
$\psi_g(x)=\prod_{j=1}^D(1-g_j+g_jx^{(j)})$,
where $g\in\{0,1\}^D$ selects the participating coordinates. Learning these gates entails discovering the relevant feature. Our model isolates the subsequent competition between amplifying predictive structure and fitting nuisance features: the core is already present, but its contribution is learned through the bilinear coefficient $uv$. This simplification allows us to isolate the fine analysis of the resulting learning time scales,
while stepping aside the combinatorial problem of discovering the unknown parity mask.

\subsection{Regularized Optimization Objective and  Progress Metrics}
We train the student model \eqref{eq:toy-model} by minimizing
the regularized squared loss
\begin{equation}
\begin{aligned}
    R=R(u,v,w):=L+\frac{\lambda}{2}(u^2+v^2+\|w\|^2),
    \text{ with }L=L(u,v,w):=\frac{1}{2n}\sum_{i=1}^nr_i^2,
\end{aligned}
\label{eq:obj}
\end{equation}
where $\lambda\ge0$ is the weight-decay parameter and $r_i:=f(x_i)-y_i$.
Let $Z=(z_1,\ldots,z_n)\in\mathbb R^{n\times d}$ be the nuisance matrix with rows $z_i$ and write $Y=(y_1,\ldots,y_n)^\top\in\mathbb R^n$, the response vector.
Introducing the effective core coefficient
\begin{eqnarray}
\theta:=\varepsilon uv,
\label{eq:core}
\end{eqnarray}
we obtain $L= \|r\|^2/(2n)$, where $r:=Zw-(1-\theta)Y$ is the vector of training residuals; exact
interpolation corresponds to $r=0$, or equivalently $L=0$.
Since $x^{(1)}=\varepsilon y$, the prediction decomposes
as $f(x)=\theta y+z^\top w$. Thus, $\theta$ measures the
contribution of the planted core feature, while $z^\top w$ captures the nuisance.

Interesting, the squared $L_2$ regularization on the factors $u$ and $v$
 translate into a an $L_1$ prior on the core feature $\theta=\varepsilon u v$. Indeed, observe that for an $\theta \in \mathbb R$,
   $$
   \min_{ u,v \in \mathbb R,\, \varepsilon u v = \theta}\frac{\lambda}{2}(u^2+v^2) = \frac\lambda\varepsilon|\theta|.
   $$
For tracking learning progress during optimization,  we consider the training accuracy and test accuracy of the student model \eqref{eq:toy-model} are respectively defined by:
\begin{align*}
    \mathcal A_{{\rm te}} &:= \frac{1}{n}\sum_{i=1}^n \mathbbm{1}_{\{f(x_i) y_i > 0\}},\quad
   \mathcal A_{{\rm tr}} := \operatorname{Pr}(y f(x) > 0),
\end{align*}
for a random test point $(x,y) \sim P$ independent of the training data.
It is easy to see that
\begin{align}
   \mathcal A_{{\rm tr}} &= \frac1n\sum_{i=1}^n \mathbbm{1}_{\{1+y_ir_i \ge 0\}},\quad 
    \mathcal A_{{\rm te}} = \Phi\left(\theta/\|w\|\right)\label{eq:test-acc},
\end{align}
where $\theta:=\varepsilon uv$. 
Here, $\Phi$ is the standard normal CDF. Thus, the dynamics of the test accuracy is completely determined by the dynamics of the core feature weight $\theta$ and the nuisance norm $\|w\|$. 



\section{High-Level Overview of gradient flow Dynamics}
\label{sec:grad-flow}

In this section, we explain the geometric picture underlying the separation between early training-data fitting and delayed generalization. When the signal amplitude and weight decay are small, nuisance weights evolve rapidly while the core feature evolves slowly. Fenichel theory describes this separation through attraction toward a slow invariant manifold, followed by approximately logistic core learning. We use this picture to motivate the quantitative hitting-time analysis and highlight why approaching the interpolation edge requires additional care.

\subsection{Equations of Motion and Stationary Points}
Define the following auxiliary random quantities:
\begin{eqnarray}
\boxed{
\begin{split}
    B &:=\frac{Z^\top Z}{n} \in \mathbb R^{d \times d},\quad
    C:=B+\lambda I_d \in \mathbb R^{d \times d},\quad a:=\frac{Z^\top Y}{n} \in \mathbb R^d,\\
    \hat w &:= C^{-1}a \in \mathbb R^d,\quad
    q := 1-a^\top \hat w \in [0,1].
\end{split}
}
\label{eq:auxiliary}
\end{eqnarray}
 Also introduce the "dual" variable $\phi := \varepsilon(u^2-v^2)/2$. We analyze the evolution of the model parameters $(u,v,w)$ in \eqref{eq:toy-model} under gradient flow (GF) on the regularized objective $R$ defined in \eqref{eq:obj}. This induces continuous-time dynamics on the core variable $\theta=\varepsilon uv$.

 \begin{remark}
 We analyze gradient flow (GF) for simplicity of exposition, however our theoretical and empirical results carry effortlessly to discrete-time gradient-descent. See Appendix~\ref{sec:experiments}.
 \end{remark}

 We start with the following result.
\begin{proposition}[Equations of motion]
    The equations of motion of $(\theta,\phi,w)$ under gradient flow of the objective $R$ given in \eqref{eq:obj} are the following system of ODEs:
\begin{equation}
\label{eq:exact}
\begin{split}
 &\dot w=(1-\theta)a-C w,\quad 
 \dot\theta=2\eps h \sqrt{\theta^2 + \phi^2} -2\lambda\theta,\\
 & \dot\phi=-2\lambda\phi,\quad h=h(\theta,w):=1-\theta-a^\top w.
\end{split}
\end{equation}
\end{proposition}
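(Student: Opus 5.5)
The plan is a direct chain-rule computation. I will first write the loss in terms of $(\theta,w)$, then differentiate through the factorization $\theta=\varepsilon uv$, and finally recombine $u^2+v^2$ into the invariants $\theta,\phi$. No earlier result is needed beyond the definitions in \eqref{eq:obj}, \eqref{eq:core} and \eqref{eq:auxiliary}.

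\textbf{Step 1 (gradients of $L$).} Since $x_i^{(1)}=\varepsilon y_i$, we have $f(x_i)=\theta y_i+z_i^\top w$, so $r=Zw-(1-\theta)Y$. Using $y_i^2=1$, i.e.\ $Y^\top Y/n=1$, together with $Z^\top Y/n=a$ and $Z^\top Z/n=B$, I get
\[
\nabla_w L=\tfrac1n Z^\top r=Bw-(1-\theta)a,
\qquad
\partial_\theta L=\tfrac1n Y^\top r=a^\top w-(1-\theta)=-h.
\]
Adding the weight-decay term $\lambda w$ gives $\dot w=-\nabla_w R=(1-\theta)a-(B+\lambda I_d)w=(1-\theta)a-Cw$.

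\textbf{Step 2 (core factors).} By the chain rule, $\partial_u L=\varepsilon v\,\partial_\theta L$ and $\partial_v L=\varepsilon u\,\partial_\theta L$. Hence
\[
\dot u=\varepsilon v h-\lambda u,
\qquad
\dot v=\varepsilon u h-\lambda v.
\]
Then
\[
\dot\theta=\varepsilon(\dot u v+u\dot v)=\varepsilon^2 h(u^2+v^2)-2\lambda\theta.
\]
Similarly,
\[
\dot\phi=\varepsilon(u\dot u-v\dot v)=\varepsilon\bigl(\varepsilon uvh-\lambda u^2-\varepsilon uvh+\lambda v^2\bigr)=-2\lambda\phi,
\]
because the $h$-terms cancel.

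\textbf{Step 3 (eliminating $u^2+v^2$).} This is the only point needing care. From the identity $(u^2+v^2)^2=(2uv)^2+(u^2-v^2)^2$ we obtain
\[
\bigl(\tfrac{\varepsilon}{2}(u^2+v^2)\bigr)^2=\theta^2+\phi^2.
\]
Since $u^2+v^2\ge0$, the nonnegative square root is the correct branch, so $\varepsilon(u^2+v^2)=2\sqrt{\theta^2+\phi^2}$. Substituting into Step 2 gives $\dot\theta=2\varepsilon h\sqrt{\theta^2+\phi^2}-2\lambda\theta$. This closes the system \eqref{eq:exact} in the variables $(\theta,\phi,w)$ and completes the proof.
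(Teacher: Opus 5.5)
Your derivation is correct and is the direct chain-rule computation that the paper leaves implicit, since the paper states this proposition without a written proof. The gradients $\nabla_w L=Bw-(1-\theta)a$ and $\partial_\theta L=-h$, the cancellation of the $h$-terms in $\dot\phi$, and the identity $\theta^2+\phi^2=\bigl(\frac{\varepsilon}{2}(u^2+v^2)\bigr)^2$ all check out; note that taking the nonnegative root in Step 3 also uses $\varepsilon>0$, which the data model guarantees.
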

    We can already see that if both $\varepsilon$ and $\lambda$ are small, then $w$ is a fast variable while $\theta$ and $\phi$ are slow variables: $w$ relaxes instantaneously compared to the clock of $\theta$ and $\phi$. This observation will ultimately be the main idea exploited to obtain a detailed account of the grokking phenomenon on this toy problem.

\begin{proposition}[Finite-sample phase boundary]
\label{prop:finite}
In terms of the $\theta$, $\phi$, and $w$ variables, the effective global minimizer of the optimization objective \eqref{eq:obj} is unique:
\begin{equation*}
\label{eq:finite-star}
\begin{split}
     \theta_*&=\left(1-\frac{\lambda}{\eps q}\right)_+,\quad
 w_*=(1-\theta_*)\hat w,\quad
 \phi_* = 0.
\end{split}
\end{equation*}
and the gradient flow \eqref{eq:exact} converges to it.

Moreover, if $\eps q>\lambda$, then there are two minima in $u/v/w$ space:
$u_*=v_*=\pm\sqrt{\theta_*/\eps}$, and one saddle point
$(u,v,w)=(0,0,\hat w)$. Every initialization verifying $u_0+v_0\ne0$
converges to a minimum point. If $\eps q\le\lambda$, the latter point is the
only stationary point and every trajectory converges to it.
\label{prop:phase-boundary}
\end{proposition}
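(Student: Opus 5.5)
The plan is to reduce everything to a one-dimensional effective objective. After that, stationary points and convergence follow from the gradient structure of \eqref{eq:obj} together with one conserved-sign quantity.

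\textbf{Step 1: profile out $w$, then $(u,v)$.} For fixed $(u,v)$, $R$ is a strictly convex quadratic in $w$ (since $C\succ0$ when $\lambda>0$). Its minimizer is $w=(1-\theta)\hat w$. Using $\|Y\|^2/n=1$, the minimal value is
\begin{equation*}
\frac{(1-\theta)^2}{2}\bigl(1-a^\top C^{-1}a\bigr)=\frac{q}{2}(1-\theta)^2 .
\end{equation*}
Next, $u^2+v^2=\frac{2}{\eps}\sqrt{\theta^2+\phi^2}$, because $(u^2+v^2)^2=(u^2-v^2)^2+4u^2v^2$. Hence the profiled objective is
\begin{equation*}
R_{\rm eff}(\theta,\phi)=\frac q2(1-\theta)^2+\frac{\lambda}{\eps}\sqrt{\theta^2+\phi^2}.
\end{equation*}
This function is minimized at $\phi=0$ and is then a quadratic plus an $\ell_1$ penalty. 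Soft-thresholding gives $\theta_*=(1-\lambda/(\eps q))_+$, and hence $w_*=(1-\theta_*)\hat w$. Uniqueness follows from strict convexity in $\theta$ when $q>0$. I would add a remark on the degenerate case $q=0$, which can occur only when $\lambda=0$ and $d\ge n$; there, and more generally for $\lambda=0$, $\phi$ is conserved and the statement should be read modulo $\phi_0$.

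\textbf{Step 2: stationary points in $(u,v,w)$.} Gradient flow gives
\begin{equation*}
\dot u=\eps vh-\lambda u,\qquad \dot v=\eps uh-\lambda v,\qquad \dot w=(1-\theta)a-Cw .
\end{equation*}
Stationarity in $w$ forces $w=(1-\theta)\hat w$, and then $h=(1-\theta)q$.

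If $(u,v)=(0,0)$, we obtain the point $(0,0,\hat w)$.

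Otherwise the $2\times2$ system in $(u,v)$ has a nontrivial solution, so $\eps h=\pm\lambda$ with $u=\pm v$. The branch $u=-v$ gives $\theta<0$ and requires $(1-\theta)q=-\lambda/\eps<0$, which is impossible. The branch $u=v$ requires $\eps q(1-\theta)=\lambda$ with $\theta=\eps u^2>0$, which is solvable exactly when $\eps q>\lambda$. This yields $u_*=v_*=\pm\sqrt{\theta_*/\eps}$.

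So when $\eps q\le\lambda$, the origin point is the only stationary point. When $\eps q>\lambda$, it is a saddle: along the direction $u=v$, the Hessian of the profiled function has eigenvalue $\lambda-\eps q<0$.

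\textbf{Step 3: convergence.} I would argue in three parts.
\begin{itemize}
\item For $\lambda>0$, $R$ is coercive and real-analytic. By the Łojasiewicz gradient inequality, every gradient-flow trajectory is bounded and converges to a single stationary point. This covers the case $\eps q\le\lambda$ at once.
\item For $\eps q>\lambda$, I use the identity
\begin{equation*}
\frac{d}{dt}(u+v)=(\eps h-\lambda)(u+v).
\end{equation*}
Thus $u+v\neq0$ is preserved along the flow. Suppose the trajectory converged to $(0,0,\hat w)$. Then $h\to q$, so the growth rate tends to $\eps q-\lambda>0$ and $|u+v|$ would eventually grow. This contradicts convergence to $u+v=0$. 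Hence every initialization with $u_0+v_0\neq0$ converges to one of the two minima.
\item Translating back to $(\theta,\phi,w)$ gives convergence to $(\theta_*,0,w_*)$, with $\phi\to0$ because $\dot\phi=-2\lambda\phi$.
\end{itemize}

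\textbf{Main obstacle.} The delicate part is $\lambda=0$. There $R$ is not coercive, and $\phi$ is conserved rather than decaying. I would handle it by working in the reduced ODE \eqref{eq:exact}, where $w$ is stable for fixed $\theta$ whenever $B\succ0$ or $a\in\operatorname{range}(B)$. With $\theta$ monotone toward the zero of $h$, this gives convergence. I would state explicitly that $\phi_*=0$ requires $\lambda>0$, or $\phi_0=0$ when $\lambda=0$.
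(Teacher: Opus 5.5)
Your argument for $\lambda>0$ is sound and complete. You profile out $w$ to get $R_{\rm eff}(\theta,\phi)=\frac q2(1-\theta)^2+\frac{\lambda}{\varepsilon}\sqrt{\theta^2+\phi^2}$, apply soft-thresholding, classify the critical points, and compute the Hessian at $(0,0,\hat w)$. You then combine \L{}ojasiewicz convergence (for a coercive polynomial $R$) with the identity $\frac{d}{dt}(u+v)=(\varepsilon h-\lambda)(u+v)$ to rule out the saddle. The paper states this proposition without proof, so there is no argument of its own to compare routes with.

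The gap is the case $\lambda=0$, which the paper relies on in all of its $\beta=0$ and $b=0$ regimes. Your sketch there rests on ``$\theta$ monotone toward the zero of $h$'', and that step fails. The quantity $h=1-\theta-a^\top w(t)$ involves the non-equilibrated nuisance. If, for instance, $a^\top w_0>1-\theta_0$, then $\dot\theta(0)<0$, and $\theta$ first decreases until the nuisance relaxes and $h$ turns positive. Moreover, the zero of $h$ moves with $w$, and stability of the $w$-equation at frozen $\theta$ says nothing by itself about the coupled system. Relatedly, your Step 2 deduction $u=\pm v$ breaks at $\lambda=0$. When $h=0$ the $2\times2$ system is identically zero, so the minimizers form the whole hyperbola $\{\varepsilon uv=1,\ w=0\}$. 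You get exactly two points only after fixing $\phi=\phi_0$.

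The repair needs no monotonicity: reuse your \L{}ojasiewicz argument and supply boundedness by hand. Take $\lambda=0$ with $B\succ0$ and $q>0$, which holds almost surely when $d<n$.
\begin{itemize}
\item $L(t)\le L(0)$ together with $L\ge\frac q2(1-\theta)^2$ bounds $\theta$.
\item Then $\|Zw-(1-\theta)Y\|^2\le 2nL(0)$ bounds $\|Zw\|$, and hence $\|w\|$.
\item Finally, $\phi\equiv\phi_0$ and $u^2+v^2=\frac{2}{\varepsilon}\sqrt{\theta^2+\phi_0^2}$ bound $(u,v)$.
\end{itemize}
Since $L$ is a polynomial, the trajectory converges to a critical point. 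Such a point is either $(0,0,\hat w)$ or satisfies $\theta=1$, $w=0$. The identity $\frac{d}{dt}(u+v)=\varepsilon h(u+v)$ with $h\to q>0$ excludes the former when $u_0+v_0\neq0$. Thus $(\theta,\phi,w)\to(1,\phi_0,0)$, which coincides with the stated $(\theta_*,\phi_*,w_*)$ exactly under balanced initialization, $\phi_0=0$.
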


To make our lives easy henceforth, we make the following balanced initialization assumption.
\begin{assumption}\label{eq:assumuv}
    $u(0)=v(0) \ne 0$.
\end{assumption}
Under this assumption, one has $\phi(t) = 0 < \theta(t)$ for all $t$ in the dynamics \eqref{eq:exact}. Thanks to Proposition \ref{prop:phase-boundary}, the only stationary point becomes $(\theta_*,w_*)$ defined above. Moreover, the equations of motion \eqref{eq:exact} reduce to the following coupled of ODEs:
\begin{equation}
\begin{split}
 \dot\theta &=2\eps (1-\theta - a^\top w(t)) \theta-2\lambda\theta(t),\quad 
 \dot w(t) = (1-\theta)a-Cw(t),
\end{split}
\label{eq:thetanODE}
\end{equation}
where the vector $a \in \mathbb R^d$ and matrix $C \in \mathbb R^{d \times d}$ are as defined in \eqref{eq:auxiliary}. The analysis of the above system of ODEs will be our main focus henceforth.



\subsection{Qualitative Insights: Fenichel Theory}
\label{sec:fenichel}

\begin{figure*}[!htp]
    \centering
    \begin{minipage}[t]{0.49\linewidth}
        \vspace{0pt}
        \centering
        \resizebox{\linewidth}{!}{%
\begin{tikzpicture}[
  scale=0.8,
  x={(4cm,-1.1cm)}, y={(-2.93cm,-1.45cm)}, z={(0cm,2.72cm)},
  >={Straight Barb[length=4pt,width=5pt,sep=2pt]},
  flow/.style={red,thick,postaction={decorate},
    decoration={markings,mark=at position .5 with {\arrow{#1}}}}
]
\draw[dashed,->] (-.13,0,0)--(1.52,0,0) node[below] {$\theta$};
\draw[dashed,->] (0,-.18,0)--(0,1.035,0) node[left] {$\phi$};
\draw[dashed,->] (0,0,-.52)--(0,0,1.45) node[above] {$w$};

\filldraw[fill=blue,fill opacity=.18,draw=blue!50]
  (-.10,-.38,1.10)--(1.22,-.38,-.22)--
  (1.22,1.18,-.22)--(-.10,1.18,1.10)--cycle;
\node[anchor=west,xshift=3mm] at (-.14,1.18,1.10)
  {$\mathcal M: w=(1-\theta)\hat w$};

\draw[green!60!black,thick] (1,1.18,0)--(1,-.38,0)
  node[pos=.03,sloped,above,anchor=south west,text=black]
  {$\mathcal L:\theta=1$};

\coordinate (A) at (.36,.30,.64);
\coordinate (B) at (1,.30,0);
\draw[flow={>>>}] (.36,.30,1.54)--(A);
\draw[flow={>>}] (A)--(B);
\draw[flow={>}] (1,.25,0)--(1,.04,0);

\draw ($(A)+(0,0,.06)$)--($(A)+(.03,0,.03)$)--($(A)+(.03,0,-.03)$);
\draw ($(B)+(-.03,0,.03)$)--($(B)+(-.03,.05,.03)$)--($(B)+(0,.05,0)$);

\fill (1,0,0) circle (1.5pt)
  node[above=4mm,xshift=2mm] {$(1,0,0)$};
\end{tikzpicture}
}
\end{minipage}\hfill
    \begin{minipage}[t]{0.51\linewidth}
        \vspace{0pt}
\caption{\textbf{Fast/slow dynamics of learning.} Triple arrows indicate fast dynamics, double arrows indicate slow, and single arrows indicate very slow dynamics. In the fast phase, the dynamics \eqref{eq:exact} quickly relaxes onto the plane $\mathcal M$ in constant time $T(w) \asymp O(1)$. Then a slower phase begins where the dynamics relaxes towards the line $\mathcal L \subseteq \mathcal M$, corresponding to learning the core feature weight $\theta:=\varepsilon u v$ in the student \eqref{eq:toy-model}. This happens on a time scale of order $T(\theta) \asymp 1/\varepsilon \gg 1$, where $\varepsilon \ll 1$ is the SNR. This phase is responsible for the long test accuracy, long after memorization (perfect training accuracy) has occurred. In the final / slowest phase (which only exists in the case of unbalanced initialization), the "harmonic dual" variable $\phi:=\varepsilon(u^2-v^2)/2$ relaxes to zero in time of order $T(\phi) \asymp 1/\lambda \gg T(\theta)$, where $\lambda \ll \varepsilon$ is the weight decay. 
}
\label{fig:fenichel}
    \end{minipage}
\end{figure*}
Suppose the signal strength $\varepsilon$ and weight decay $\lambda$ be small such that $\lambda/\varepsilon=O(1)$. The second equation in \eqref{eq:thetanODE} evolves faster than the first: for every value of $\theta$, $w$ instantaneously relaxes to a value satisfying the equation $(1-\theta)a-Bw=0$, which corresponds to setting the RHS of the former to zero. This implicit equation defines a so-called \emph{slow manifold} $\mathcal M$ in parameter space, on which the better part of the dynamics is trapped. Because $a:=Z^\top Y/n$ is in the image of $Z^\top$, this invariant manifold is given explicitly by:
\begin{equation}
    \mathcal M := \{(\theta,w) \in \mathbb R^{1+d} : w - (1-\theta)b \in \operatorname{Ker}Z\},
    \label{eq:slowman}
\end{equation}
where $b := B^+ a$ is min-norm nuisance interpolator. It is easy to see that $\mathcal M$ is an affine subspace of $\mathbb R^{1+d}$ of dimension $1$ if $d \le n$ and $d-n+1$ if $d > n$. 
There are thus two qualitatively distinct sub-regimes to consider. 

\textbf{The sample-poor regime: $d > n$.}
In this setting, one has $q|_{\lambda=0}=0$ almost-surely. The reduced ODEs along the slow manifold become $\mathcal M$: $\dot \theta=-2\lambda\theta$, $w_\perp = 0$, where $w_\perp := w-(1-\theta)b$ is the part of $w$ which is in the kernel of $Z$.
The dynamics converges to the pathological saddle $(0,b)$ which does not generalize: the limiting test accuracy \eqref{eq:test-acc} is $50\%$, i.e. chance level. 
    
\textbf{The sample-rich regime: $d \le n$.}
In this regime, one has $q|_{\lambda=0} > 0$ almost-surely. The transverse Jacobian of the limiting fast dynamics along $\mathcal M_0$ is
$J_0=-B$.
Since $B$ is positive definite (almost-surely), all transverse eigenvalues
are strictly negative, establishing normal attraction.
Now, consider the slow time $\tau := 2\varepsilon t$, which allows us to zoom into the slow dynamics on $\mathcal M$.  Let primes denote rate of change measured in this new clock.
Along  $\mathcal M$, the equations of motions reduce to the following  decoupled system of ODEs:
\begin{align}
\label{eq:slow-manifold-motion}
    \theta' &= q(1-\theta)\theta-\kappa\theta,\quad w=(1-\theta)b.
\end{align}
The first equation is a \emph{logistic} ODE, with a natural relation time scale of $\tau \asymp q^{-1}$, i.e
$T(\theta) \asymp (\varepsilon q)^{-1}$ physical time. Actual crossing times also depend on
initialization, regularization, and the target level.
For fixed $d<n$, Fenichel theory \citep{Fenichel1979} then suggests the following  picture:
\begin{itemize}
    \item A \emph{fast phase} during which nuisance weights approach
    a neighborhood of an attracting slow manifold. Entry within
    a fixed tolerance takes $O(1)$ physical time for a fixed
    positive spectral gap. Training accuracy can already be high during this phase, tending to perfect memorization.
    \item A \emph{slow phase}, during which the core evolves according to approximately logistic dynamics, while nuisance weights track its evolution. Whether this leads to the prescribed
    test accuracy depends on both regularization and the
    statistical regime.
\end{itemize}
Figure~\ref{fig:fenichel} illustrates this geometric picture. Note that formally, the invariant slow manifold is an $O(\varepsilon)$-perturbation $\mathcal M^\varepsilon$ of the critical manifold $\mathcal M$, rather than $\mathcal M$ itself \citep{Fenichel1979}.

This perspective is similar in spirit to the lazy-to-rich-transition account of \citet{Kumar2024LazyToRich,nishiyama2026precise}, in which early fitting precedes substantial feature learning. Our model makes a corresponding separation explicit through fast nuisance fitting and slow core learning, without identifying these phases with a neural tangent kernel transition. It also connects to \citet{Boursier2025Riemannian}, who analyze fast fitting followed by slow, weight-decay-driven norm minimization along a manifold of critical points. In our setting, small signal amplitude provides a distinct source of slow dynamics, allowing delayed generalization even without weight decay. Moreover, the statistical analysis determines when slow core learning translates into test accuracy, a distinction that becomes crucial near interpolation.

Turning the above geometric insights from Fenichel theory into a quantitative account of grokking requires comparing the core-learning scale with nuisance relaxation. The latter depends on the spectrum of the empirical covariance and need not remain bounded as the dimension grows. In particular, the proportional regime $d/n\to\gamma<1$ differs from approaching interpolation with $d<n$ but $d/n\to1$, where the spectral gap vanishes. Section \ref{sec:grok}
establishes conditions under which the separation persists and translates it into explicit hitting-time estimates.

\section{Main Results}
\label{sec:grok}
In this section, we quantify the time required to learn the core
feature $\theta$ and attain a prescribed test accuracy, and identify
when these targets are unattainable. Starting from the gradient flow
system \eqref{eq:thetanODE}, we focus on the  sample-rich setting
$d<n$ discussed at the end of Section~\ref{sec:fenichel}, where
nuisance fitting alone cannot interpolate the training labels
and core learning is necessary to drive the training loss to zero.




\subsection{Preliminaries}
    Given some quantity $S_n$ which evolves alongside the dynamics \eqref{eq:thetanODE}, and a threshold $\delta>0$, define $T_\delta(S_n)$ to be the time taken for $S_n$ to cross level $\delta$, i.e. 
\begin{eqnarray}
    T_\delta(S_n) := \inf\{t \ge 0 :  S_n(t) \ge \delta\},\text{ with }\inf\emptyset := \infty.
    \label{eq:Tdelta}
\end{eqnarray}
We shall particularly be interested in: $T_\delta(\theta_n)$, which is the time taken to learn the core feature weight above level $\delta$; $T_p(\mathcal A_{\mathrm{te},n})$, which is the time taken to reach a test accuracy of at least $100p\%$; $T_p(\mathcal A_{\mathrm{tr},n})$, which is the time  to reach a training accuracy of at least $100p\%$.

\textbf{A reference slow ODE.}
Note that hitting times  $T_\delta(S_n)$ are a random quantities, due to randomness in the training dataset which drives the dynamics. It turns out that most of the hitting times of interest to us can be neatly expressed in terms of the following deterministic function. For any core initialization $\theta_0 \in (0,1)$, level $\delta \in (\theta_0,1)$, signal strength $\varepsilon$, and scalar $\beta \ge 0$ with $\beta < \min(1-\theta_0,1-\delta)$, define 
    \begin{eqnarray}
        \quad H_\delta(\beta) := \frac{1}{1-\beta}
        \log\dfrac{(1-\theta_0 - \beta)\delta}{\theta_0\,(1-\beta-\delta)} > 0.
        \label{eq:H}
    \end{eqnarray}
Instructively, the ratio $H_\delta(\beta)$ is nothing but the time the solution $\bar\Theta(s)=\bar \Theta(s;\beta)$ of the following reference logistic ODE takes to cross a level $\delta$:
\begin{eqnarray}
\boxed{
   {\bar \Theta}'(s) = \bar \Theta(s)(1- \beta - \bar \Theta(s) ),\quad \bar \Theta(0) = \theta_0.
}
    \label{eq:bigtheta}
\end{eqnarray}
The above reference trajectory is precisely the form of the ODE on the slow manifold \eqref{eq:slow-manifold-motion} with $\kappa=q\beta$ and rescaled time $s=q \tau$. 


\textbf{Displacement from invariant manifold.}
Recall the definition of the vector $a=a_n \in \mathbb R^d$, the empirical nuisance covariance matrix $B=B_n \in \mathbb R^{d \times d}$, and the ridge nuisance interpolator $\hat w_n \in \mathbb R^d$. 
Thus, defining $e_n :=w_n-(1-\theta_n)\hat w_n$, the quantity
\begin{eqnarray}
T^\xi(\|e_n\|) = \inf\{t_0 \ge 0 : \sup_{t \ge t_0}\|e_n(t)\| \le \xi\}.    
\label{eq:entry-time}
\end{eqnarray}
measures the time the dynamics take to relax in an $\xi$-neighborhood of the slow manifold 
\begin{eqnarray}
    \mathcal M_n
:=
\left\{(\theta,w)\in\mathbb R^{1+d}:
w=(1-\theta)\hat w_n\right\}.
\label{eq:graph}
\end{eqnarray}
Along this manifold, the nuisance weights are given explicitly in terms of the core feature $\theta$. 
Note that when the regularization is turned off ($\lambda_n=0$), $\mathcal M_n$ reduces to the slow invariant manifold \eqref{eq:slowman}.


   Our analysis from Section \ref{sec:fenichel} suggested that the system of ODEs \eqref{eq:thetanODE} rapidly collapses in time $O(1)$ to the slow manifold $\mathcal M_n$. The remainder of the dynamics is then reduced to an ODE of the form \eqref{eq:slow-manifold-motion}, or \eqref{eq:bigtheta} with appropriately normalized time and "effective regularization" $\beta$.
   Thus, in this setting, one might expect that the original system relaxes in time of order $H_\delta(\beta)\ell_n$.

\textbf{Asymptotics scaling limits.} We shall work in the setting where the nuisance dimension $d = d_n$, the weight decay $\lambda = \lambda_n$, and the signal strength $\varepsilon = \varepsilon_n$  vary with sample size as follows:
\begin{equation}
\boxed{
\begin{split}
    &\kappa_n > 0,\quad \varepsilon_n > 0,\quad \lambda_n \ge 0,\quad \beta_n = O(1),\quad \|w_n(0)\| = O(1),\\
    &n\to\infty,\quad \frac{d_n}{n} \to \gamma \in (0, 1],\quad \kappa_n \to \infty,\quad \frac{\varepsilon_n}{\rho_n} \to 0,\\
   &\text{where }\kappa_n:=n-d_n,\quad \rho_n := \frac{\kappa_n}{n},\quad   \ell_n := \frac{1}{2\varepsilon_n\rho_n},\quad
   \beta_n := \frac{\lambda_n}{\varepsilon_n\rho_n}.
    \end{split}
}
        \label{eq:biglimit}
\end{equation}   
The conditions $\varepsilon_n > 0$ and $\kappa_n > 0$ are crucial: the first one ensures that there is signal to drive the learning of the core feature (albeit, possibly on an arbitrarily long time scale) while the second ensures that the nuisance design matrix $Z$ always has a trivial kernel so that the dynamics does not get trapped inside a basin of invisible nuisance features. Also, the above conditions imply
\begin{eqnarray*}
\begin{split}
    \eps_n,\lambda_n &\to 0,\, \rho_n=O(1),\, \lambda_n/\eps_n = O(1),\,
    \ell_n = \Omega(\varepsilon_n^{-1}).
\end{split}
\end{eqnarray*}
Thus, $\ell_n \to \infty$. Importantly, we allow for the possibility of no weight-decay $\lambda_n=0$. 
The case $\gamma \in (0,1)$ corresponds to the under-parametrized regime, while $\gamma=1$ is the interpolation edge where $d_n \simeq n$.

\begin{table}[htp]
\centering
\small
\renewcommand{\arraystretch}{1.2}

\begin{tabular}{
  >{\RaggedRight\arraybackslash}
  p{\dimexpr.25\linewidth-2\tabcolsep-0.75\arrayrulewidth\relax}
  >{\RaggedRight\arraybackslash}
  p{\dimexpr.25\linewidth-2\tabcolsep-0.75\arrayrulewidth\relax}
  >{\RaggedRight\arraybackslash}
  p{\dimexpr.25\linewidth-2\tabcolsep-0.75\arrayrulewidth\relax}
  >{\RaggedRight\arraybackslash}
  p{\dimexpr.28\linewidth-2\tabcolsep-0.75\arrayrulewidth\relax}
}
\toprule
\textbf{Regime}
&
\textbf{Effective dynamics \newline of core feature}
&
\textbf{Effective dynamics \newline of nuisance}
&
\textbf{Conclusion}
\\
\midrule

Under-parameterized, \newline
weak regularization
($\gamma<1$, $\beta<1-\delta_p$)
&
Logistic ODE \eqref{eq:bigtheta}; limiting core trajectory converges to $1-\beta>\delta_p$
&
Enters fixed vicinity of the slow sheet $\mathcal M_n$ in constant time, then tracks the core feature
&
Grokking occurs on time scale of order $\ell_n \ll 1$ (given in \eqref{eq:biglimit}), which diverges with the sample size $n$
\\
\midrule

Under-parameterized,\newline
strong regularization
($\gamma<1$, $\beta>1-\delta_p$)
&
Logistic ODE \eqref{eq:bigtheta};  limiting core trajectory
converges to  \newline $(1-\beta)_+<\delta_p$
&
Enters fixed vicinity of $\mathcal M_n$ in
constant time, then tracks core
&
Target test accuracy $p$ is never reached
\\
\midrule

Interpolation edge, \newline
weak regularization
($\gamma=1$, $b<1/z_p$)
&
Logistic ODE \eqref{eq:bigtheta};  limiting core trajectory
converges to $1$
&
Entry time diverges, but is of order
$o(\ell_n)$ and
$O(\rho_n^{-2}\log\rho_n^{-1})$
&
Fixed core levels reached on scale $\ell_n$ (diverging with $n$); grokking occurs on time scale
of order $\ell_n\log\rho_n^{-1} \gg \ell_n$
\\
\midrule

Interpolation edge,
strong regularization
($\gamma=1$, $b>1/z_p$)
&
Logistic ODE \eqref{eq:bigtheta}; limiting core trajectory
converges to $1$
&
Same upper-bounds as above; stationary nuisance norm $\|w_n(t)\|$ tends to $b$
&
Fixed core levels reached on scale of order  $\ell_n$,
but target test accuracy $p$ is never reached
\\
\bottomrule
\end{tabular}

\caption{\textbf{Phase diagram for the (non)emergence of grokking
(Theorem~\ref{thm:big}).}
Here, the scales $\rho_n=O(1)$ and $\ell_n\to \infty$ are
as defined in \eqref{eq:biglimit}, while $z_p$ and $\delta_p$
are as defined in \eqref{eq:thresholds}--\eqref{eq:delta}. The under-parametrized regimes additionally assume $\beta_n \to \beta$, while 
interpolation-edge regimes  assume
$\varepsilon_n\rho_n^{-2}\to0$ and
$\beta_n\sqrt{\rho_n^{-1}-1}\to b\in[0,\infty)$;
in particular, $\beta=0$ in both edge rows.
Grokking denotes early attainment of the training target
followed by delayed generalization, for the given test target $p$.
A precise hitting-time equivalent is provided in
\eqref{eq:all-timer}; the logarithmic factor at the edge
is already included in the displayed time scale.
The scale $\ell_n$ diverges with sample size $n$,
accounting for long plateaus.
Regularization  / weight decay $\lambda_n$ is classified relative
to the target $p$.
Core convergence refers to the limiting slow trajectory
as slow time tends to infinity; stationary nuisance limits
are taken as $n\to\infty$.
Limits and time estimates are in probability;
non-attainment holds with probability tending to one. Detailed results are provided in 
$T^\xi(\|e_n\|)$, defined in \eqref{eq:entry-time}.
Detailed claims are provided in Appendix \ref{sec:detailed-theory}.}
\label{tab:grokking-regimes}
\end{table}

\subsection{Delayed Generalization}

For a fixed test accuracy target $p\in(1/2,1)$, define
\begin{equation}
    z_p:=\Phi^{-1}(p),\quad
    \delta_{n,p}:=
    \frac{z_p\sqrt{\rho_n^{-1}-1}}
    {1+z_p\sqrt{\rho_n^{-1}-1}}.
    \label{eq:thresholds}
\end{equation}
Note that $z_p > 0$ and $\delta_{n,p} \in [0,1)$. Below interpolation, it holds that $ \delta_{n,p}\rightarrow
    \delta_p$ where
\begin{equation}
   \delta_p :=
    \frac{z_p\sqrt{\gamma/(1-\gamma)}}
    {1+z_p\sqrt{\gamma/(1-\gamma)}} \in (0,1),
    \label{eq:delta}
\end{equation}
whereas $\delta_{n,p}\to1$ at the interpolation edge. The following is one of our main results.
\begin{theorem}[Informal]
\label{thm:big}
For each $\gamma\in(0,1]$, there exists a threshold
$p_{\mathrm{fast}}(\gamma)\in(0,1)$ such that the following
holds. Fix a test target $p\in(p_{\mathrm{fast}}(\gamma) \land0.5,1)$.
Under \eqref{eq:biglimit} and the additional conditions
in Table~\ref{tab:grokking-regimes}, the core dynamics,
nuisance relaxation, and generalization behavior are as
stated in the table. Limits and hitting-time estimates
are understood in probability; non-attainment holds with
probability tending to one. 
\end{theorem}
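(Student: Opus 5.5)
The plan is to reduce the full system \eqref{eq:thetanODE} to the logistic reference ODE \eqref{eq:bigtheta} by a quantitative slow--fast argument, and then translate core trajectories into accuracy through \eqref{eq:test-acc}.

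\textbf{Random-matrix step.} First I would collect the asymptotics of the random quantities in \eqref{eq:auxiliary}. Write $a = Z^\top Y/n$ and let $P$ be the projection onto the column space of $Z$. Then
\[
a^\top B^{-1}a = \frac{\|PY\|^2}{n},
\]
so that $q|_{\lambda=0} = \|(I-P)Y\|^2/n$. Since $Y$ is independent of $Z$ and $P$ is a uniformly random rank-$d$ projection, this gives $q/\rho_n\to1$ in probability, with relative fluctuations $O(\kappa_n^{-1/2})\to0$. I would then handle the ridge perturbation, showing $q_\lambda=q_0+O(\lambda_n\|B^{-1}\|\,\|\hat w\|^2)$, which is negligible relative to $\rho_n$ under $\varepsilon_n/\rho_n\to0$ and $\beta_n=O(1)$. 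I would also record:
\begin{itemize}
\item the norm $\|\hat w\|^2=a^\top C^{-2}a\approx\rho_n^{-1}-1$, which follows from the inverse-Wishart trace;
\item the spectral edge $\lambda_{\min}(B)\approx(1-\sqrt{d/n})^2$, of order $\rho_n^2$ at the edge (Bai--Yin, or Rudelson--Vershynin for $\kappa_n\to\infty$).
\end{itemize}
These fix $\ell_n$, the effective regularization $\kappa=\lambda/\varepsilon\approx q\beta_n$, and the limiting target threshold.

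\textbf{Tracking step.} Next I set $e=w-(1-\theta)\hat w$. By \eqref{eq:thetanODE} this satisfies
\[
\dot e=-Ce+\dot\theta\,\hat w,
\]
and the core equation becomes $\dot\theta=2\varepsilon\theta\bigl(q(1-\theta)-a^\top e\bigr)-2\lambda\theta$. Duhamel's formula then gives
\[
\|e(t)\|\le e^{-\mu t}\|e(0)\|+\mu^{-1}\sup|\dot\theta|\,\|\hat w\|,
\qquad \mu=\lambda_{\min}(C),
\]
with $|\dot\theta|=O(\varepsilon)$. Below interpolation, $\mu$ is bounded below, so the entry time is $O(\log\xi^{-1})$ and the residual is $O(\varepsilon_n)$. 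At the edge, $\mu\asymp\rho_n^2$, so the entry time is $O(\rho_n^{-2}\log\rho_n^{-1})$; this is $o(\ell_n)$ precisely because $\varepsilon_n\rho_n^{-2}\to0$, and the residual $\varepsilon\rho_n^{-2}\|\hat w\|$ vanishes as well. Substituting into the $\theta$ equation in the slow time $s=2\varepsilon q t$ gives a perturbed logistic ODE whose perturbation is $|a^\top e|/q$. A Gronwall comparison, applied to $\log\theta$ and run over the bounded slow horizon $H_\delta(\beta)+1$, then yields $T_\delta(\theta_n)/\ell_n\to H_\delta(\beta)$. It also gives $\theta_n\to(1-\beta)_+$ in slow time, which produces the core columns of Table~\ref{tab:grokking-regimes}.

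\textbf{Accuracy step.} On the slow manifold, $\|w\|\approx(1-\theta)\|\hat w\|\approx(1-\theta)\sqrt{\rho_n^{-1}-1}$. By \eqref{eq:test-acc}, the test accuracy is therefore at least $p$ iff $\theta/(1-\theta)\ge z_p\sqrt{\rho_n^{-1}-1}$, that is, iff $\theta\ge\delta_{n,p}$ from \eqref{eq:thresholds}. Below interpolation:
\begin{itemize}
\item the test hitting time is $T_{\delta_p}(\theta)\sim H_{\delta_p}(\beta)\ell_n$ when $\beta<1-\delta_p$;
\item when $\beta>1-\delta_p$ the target is never reached, since $\sup_t\theta<\delta_p$ with high probability.
\end{itemize}
Training accuracy is high after $O(1)$ time, because the residuals $r_i$ are then approximately $(1-\theta_0)\bigl((I-P)Y\bigr)_i$. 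This step is where $p_{\mathrm{fast}}(\gamma)$ enters: it is chosen so that the test target is not already met at $\theta_0$, which is what makes the test crossing genuinely delayed, i.e.\ grokking.

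\textbf{Edge and main obstacle.} At the edge, $\delta_{n,p}\to1$, and the analysis must resolve $1-\theta$ at scale $\sqrt{\rho_n}$. Near $\theta=1$, the ridge shift $\lambda/(\varepsilon q)\approx\beta_n$ matters through the product $\beta_n\sqrt{\rho_n^{-1}-1}\to b$. From the stationary point in Proposition~\ref{prop:finite}, the nuisance norm tends to $\beta_n\|\hat w\|\to b$. As a result:
\begin{itemize}
\item if $b>1/z_p$, the stationary accuracy $\Phi(1/b)$ is below $p$, and the target is never reached;
\item if $b<1/z_p$, the logistic tail
\[
1-\theta\approx\frac{1-\theta_0}{\theta_0}\,e^{-s}
\]
must be pushed down to $O(\sqrt{\rho_n})$, which takes $s\approx\tfrac12\log\rho_n^{-1}$ and gives the time $\ell_n\log\rho_n^{-1}$ stated in \eqref{eq:edge-Tdelta}.
\end{itemize}
I expect this to be the hardest step. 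The comparison must be uniform over a slow horizon that grows like $\log\rho_n^{-1}$, so errors of size $\varepsilon\rho_n^{-2}$ can accumulate. They must moreover be compared against the vanishing quantity $1-\theta$ rather than against $\theta$ itself. I would try to control this by running Gronwall on $\log(1-\theta)$ and splitting $e$ along the eigenbasis of $B$, since the components in small-eigenvalue directions carry little of $a$.
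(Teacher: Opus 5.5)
Your decomposition is the paper's: Gaussian inputs for $q$, $\|\hat w\|$ and $\lambda_{\min}(B)$; the displacement $e=w-(1-\theta)\hat w$ with $\dot e=-Ce+\dot\theta\,\hat w$; the threshold $\delta_{n,p}$; and the role of $b$. However, the step you flag as hardest is left open, and the tool you propose cannot close it.

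Write $\Theta_n(s)=\theta_n(\ell_ns)$, and let $L_n$ solve $L_n'=L_n(c_n-\beta_n-c_nL_n)$, $L_n(0)=\theta_0$, with $c_n=q/\rho_n$. At the edge you need $\sup_{s\ge0}|\Theta_n(s)-L_n(s)|=o(\sqrt{\rho_n})$. The reason is that $\|w\|\approx(1-\theta)\|\hat w\|$, with $\|\hat w\|\asymp\rho_n^{-1/2}$, must be known to $o(1)$.

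A comparison driven only by the size of the perturbation $a^\top e/q$ cannot deliver this. Even the sharp tracking estimate bounds it only by order $\varepsilon_n\rho_n^{-2}$ after entry. A rate perturbation of that size can shift the stable value of $1-\theta$ by as much, and $\varepsilon_n\rho_n^{-2}$ is not $o(\sqrt{\rho_n})$ when, say, $\varepsilon_n=\rho_n^{9/4}$.

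The missing idea is structural. The exact identity
\[
1-\theta-a^\top w=q(1-\theta)+\hat w^\top\dot w
\]
shows that $-a^\top e=\hat w^\top\dot w$ is a total derivative. Integrating $\log\theta$ therefore leaves only the boundary term $E_n(s)=2\varepsilon_n\hat w_n^\top[w_n(\ell_ns)-w_n(0)]$. This is bounded for all $s\ge0$ by $D_n=O(\varepsilon_n/\rho_n)$, since $\|w_n(t)\|\le\|w_n(0)\|+\|\hat w_n\|$ while $\theta_n\le2$. The substitution $X_n=\Theta_ne^{-E_n}$ gives $X_n'=X_n(c_n-\beta_n-c_ne^{E_n}X_n)$, and scalar comparison yields $e^{-2D_n}L_n\le\Theta_n\le e^{2D_n}L_n$ for all $s\ge0$.

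This bound is uniform over the growing horizon, uses no entry-time information, and has precision $O(\varepsilon_n/\rho_n)=o(\sqrt{\rho_n})$. It also supplies the $\sup_{t\ge0}$ control that every non-attainment claim needs and that your bounded-horizon Gronwall does not give:
\begin{itemize}
\item $\sup_t\theta_n<\delta_p$ below interpolation;
\item the uniform bound $(1-\theta_n)\|\hat w_n\|\ge b-o(1)$ when $b>1/z_p$. Here knowing the stationary point alone does not exclude an earlier crossing.
\end{itemize}

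Two further steps fail as written.

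\emph{Tracking at the edge.} Your residual $\mu^{-1}\sup|\dot\theta|\,\|\hat w\|$ is of order $\varepsilon_n\rho_n^{-5/2}$ at the edge. This does not vanish under $\varepsilon_n\rho_n^{-2}\to0$ (same example), yet you need $\|e\|=o(1)$ in absolute terms, because $\|w\|$ is compared with $b$ and $1/z_p$. The paper moves the feedback term $-2\varepsilon\theta\,\hat w\,a^\top e$ into the linear part and symmetrizes. With $z=C^{1/2}e$ and $u=C^{-1/2}a$,
\[
\dot z=-(C+2\varepsilon\theta\,uu^\top)z+2\varepsilon\theta\,u\,[q(1-\theta)-\lambda/\varepsilon],
\]
where the operator dominates $C$, $\|u\|\le1$, and the bracket is $O(\rho_n)$. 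This gives $\|e(t)\|\le K(\rho_n^{-1}e^{-c\rho_n^2t}+\varepsilon_n\rho_n^{-2})$. Your ridge estimate loses in the same way; use $0\le q_\lambda-q_0\le\lambda\|B^{-1}a\|^2$ instead.

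\emph{Early crossings.} You never exclude an earlier crossing of the test target, although $T_p$ is a first hitting time. The threshold $p_{\mathrm{fast}}(\gamma)$ is not the on-manifold accuracy at $\theta_0$. It is $\sup_t\Phi(\theta_0/\sqrt{V_\gamma(t)})$ over the fast transient, and it acts through the uniform fast-phase convergence of Theorem~\ref{thm:early} on $[0,T]$.

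At the edge the entry time diverges. This is a table claim you also leave unproved; it follows from $\sqrt{\rho_n}\|e_n(T)\|\to1-\theta_0$. Consequently there remains a window $[T,\rho_n^{-2}\log^2\rho_n^{-1}]$ with no tracking. The paper keeps $\|w_n(t)\|\ge2/z_p$ there by a variation-of-constants lower bound. Its spectral sum tends to $J(T)=\int(1-e^{-xT})^2x^{-1}\mu_1(\mathrm dx)$, which diverges as $T\to\infty$ because $\mu_1$ has density of order $x^{-1/2}$ at the hard edge.

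Without this step, the lower bound $T_p\ge(1-o(1))\frac{\ell_n}{2}\log\rho_n^{-1}$ is not established. Note also the factor $\frac12$, which your final time estimate drops.
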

Detailed statements appear as Proposition~\ref{prop:tracking}, Theorem~\ref{thm:underparam}, and Theorem~\ref{thm:edge} of Appendix~\ref{sec:detailed-theory}.

This theorem separates three events: fitting the training
data, learning the core feature, and generalizing. Training targets significantly above chance are reached in $O(1)$ time, while relaxation of the core requires the longer scale time of order $\ell_n$. Whether generalization follows on that same scale depends on dimensionality and regularization, as illustrated in
Figures~\ref{fig:experiment_hard_interp}
and~\ref{fig:experiment_underparams}.  The threshold $p_{\rm fast}(\gamma)$ is defined in equation  \eqref{eq:pfast} of the appendix,  and corresponds to the best test accuracy obtainable on a constant time scale. It is of order
$$
p_{\rm fast}(\gamma) = 0.5 +O(\theta_0)
$$
for small initialization $\theta_0 \in (0,1)$ of the core feature.

The early dynamics of training accuracy is provided in Theorem \ref{thm:early}, which establishes that every training accuracy less than an attainable ceiling $P_\gamma$ is attainable in constant time.


\paragraph{Sketch of proof of Theorem \ref{thm:big}.} One starts the first ODE in  \eqref{eq:thetanODE}, and write the following exact identity for the term involving the nuisance vector $w_n$ on the RHS:
$$
1-\theta_n(t)-a_n^\top w_n(t) = (1-a_n^\top \hat w_n)(1-\theta_n(t)) - a_n^\top e_n(t),
$$
where we recall that $e_n(t):=w_n(t) - (1-\theta_n(t))\hat w_n$ is the displacing from the slow manifold \eqref{eq:graph}, while the vector $a_n,\hat w_n \in \mathbb R^d$ and matrix $C_n \in \mathbb R^{d \times d}$ are as defined in \eqref{eq:auxiliary}.
Also, concentration gives $1-a_n^\top \hat w_n \simeq \rho_n$ in probability. The highlight of the proof of the theorem is then to show that the error term $a_n^\top e_n(t)$ is uniformly small (for $t$ sufficiently larger than a fixed constant $t_0$) and then compare the equations of motion \eqref{eq:thetanODE} with the logistic ODE \eqref{eq:bigtheta} on rescaled clock $s=t/\ell_n$, where $\ell_n$ is as given in \eqref{eq:biglimit}.

Precise statements and full details of the proof are provided in Appendix \ref{sec:detailed-theory}.

\subsection{What Controls the Plateau Length?}
\label{sec:plateau-lengths}
Under the assumptions in Theorem \ref{thm:big} ensuring delayed attainment of the test target, the plateau length is
\begin{equation}
    T_p(\mathcal A_{\mathrm{te},n})
    \simeq H_{\delta_{n,p}}(\beta_n)\ell_n,
    \label{eq:all-timer}
\end{equation}
with $H_{\delta_{n,p}}(\beta_n)$ as in \eqref{eq:H}, $\delta_{n,p}$ as in  \eqref{eq:thresholds}, and $\ell_n$ as in \eqref{eq:biglimit}. The plateau length therefore combines the base scale $\ell_n$
with the effects of initialization, regularization,
and the required core threshold.

\textbf{Below interpolation.}
Consider the regime where $\gamma < 1$ in the limits \eqref{eq:biglimit}. For $\beta<1-\delta_p$, the prediction \eqref{eq:all-timer} from our Theorem \ref{thm:big} simplifies to:
\begin{eqnarray}
            T_p(\mathcal A_{\mathrm{te},n})\simeq \ell_n H_{\delta_p}(\beta),
    \text{ with }\ell_n\simeq \frac{1}{2(1-\gamma)\varepsilon_n}.
    \label{eq:underparam-Tdelta}
\end{eqnarray}
Thus, the plateau length is inversely
proportional to signal strength $\varepsilon_n$. Weight decay also affects this length: $H_{\delta_p}(\beta)$ increases
with $\beta$ and diverges as $\beta\uparrow1-\delta_p$.
Above this threshold, there is no grokking: the target $p$ is not attained
with probability tending to $1$.

\textbf{At the interpolation edge.} Now consider the setting where $\gamma=1$ in the limits \eqref{eq:biglimit}. 
Under the additional edge assumptions, with $b<1/z_p$, observe that
$$
    (1-\delta_{n,p})/\sqrt{\rho_n} \to 1/z_p,
    \quad
    \beta_n\sqrt{\rho_n^{-1}-1}\to b.
$$
 Substitution into \eqref{eq:all-timer} and some algebra yields
\begin{eqnarray}
            T_p(\mathcal A_{\mathrm{te},n}) -  T_\delta(\theta_n) \simeq 0.5 \ell_n \log \rho_n^{-1} \gg \ell_n.
            \label{eq:edge-Tdelta}
\end{eqnarray}
Thus, in this regime, generalization requires an additional long
delay beyond learning a fixed core level. The growing
nuisance contribution must be offset by a vanishing
core deficit before the test target accuracy $p$ can be attained. 

\textbf{The effect of initialization.}
In the regimes where our theory guarantees grokking, the formulae
\eqref{eq:underparam-Tdelta}--\eqref{eq:edge-Tdelta}
describe how the plateau depends on the initialization $\theta_0$. For small $\theta_0 \in (0,1)$, the normalized
plateau length $T_p/\ell_n$ is predicted to grow logarithmically with
$1/\theta_0$, with coefficient $(1-\beta_n)^{-1}$ in the regime
below interpolation. At the interpolation edge,
$\beta_n\to0$ and the corresponding coefficient is one.
Thus, early nuisance fitting can achieve high training
accuracy while an initially weak core contribution takes
much longer to become predictive. Increasing $\theta_0$
shortens this delay. 
A similar dependence was found by \citet{nishiyama2026precise} in the context of linear regression with diagonal networks.

This dependence on $\theta_0$ gives a direct experimental diagnostic: at fixed signal amplitude, aspect ratio, and effective
regularization, plotting $T_p/\ell_n$ against
$\log(1/\theta_0)$ should approximately yield a line
of slope $(1-\beta_n)^{-1}$ for small $\theta_0$
below interpolation, and slope one at the edge.
Figure~\ref{fig:plateau_scaling} supports this prediction
in the under-parameterized regime; corresponding
edge experiments appear in Figure~\ref{fig:edgy} and Figure~\ref{fig:plateau_n=10000}
of Appendix~\ref{sec:experiments}.


\begin{figure}[!htp]
    \centering
    \begin{minipage}[c]{0.45\linewidth}
        \centering
        \includegraphics[width=\linewidth]{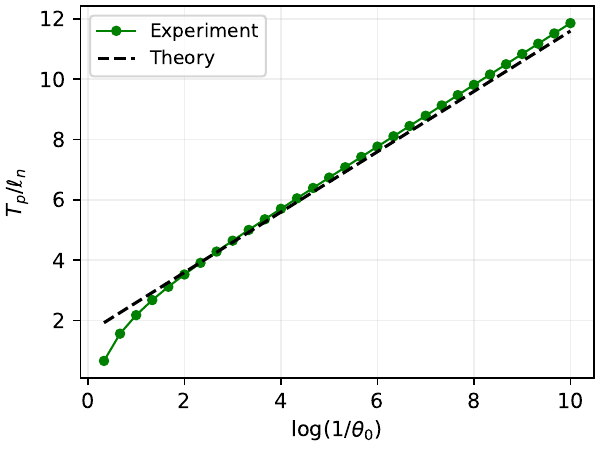}
    \end{minipage}\hfill
    \begin{minipage}[c]{0.54\linewidth}
        \caption{\textbf{Logarithmic scaling of grokking time w.r.t core feature initialization $\theta_0$}. Experiment with $n=400$ samples, nuisance dimension $d=360$, a small signal strength $\eps=10^{-2}$, and no weight decay ($\lambda=0$). Perfect terminal test accuracy ($100\%$) is always attained in this setting. Empirically measured normalized hitting time $T_p/\ell_n$ scales linearly in $\log(1/\theta_0)$ in view of \eqref{eq:underparam-Tdelta}, as predicted by Theorem \ref{thm:big}.}
        \label{fig:plateau_scaling}
    \end{minipage}
\end{figure}

\subsection{Grokking does not Require Weight Decay}

Weight decay is central to several accounts of grokking
\citep{Nanda2023ProgressMeasures,varma2023explaining,
Lyu2024Dichotomy,Boursier2025Riemannian}, but is it unclear whether it is really
necessary for the phenomenon. Setting $\lambda_n=0$
gives $\beta_n=0$, placing the dynamics in the $\beta=0$
regime below interpolation and the $b=0$ regime at the
interpolation edge. Under the corresponding assumptions, our 
Theorem~\ref{thm:big} guarantees grokking. 
This is clearly visible empirically in Figure \ref{fig:experiment_underparams} (left), Figure \ref{fig:experiment_hard_interp}, and Figure \ref{fig:plateau_scaling}.
Our results complement previous demonstrations of
unregularized grokking
\citep{Gromov2023GrokkingModularArithmetic,Kumar2024LazyToRich,LeviGrokkingEdge}
with explicit conditions and analytic plateau
lengths. They also show that regularization can modify
the delay or prevent attainment of the test target,
without being necessary for delayed generalization. 

Appendix \ref{sec:experiments} contains further empirical validation of our theoretical results.

\section{Related Work}
\label{sec:related}
In its current form, the grokking phenomenon was first observed by \citet{Power2022Grokking} as delayed
generalization following early fitting of the training data.
Subsequent work established that its occurrence depends on
initialization, regularization, data, and architecture, and that
it extends beyond algorithmic tasks \citep{Liu2023Omnigrok}.
These observations motivate studying the mechanisms behind the
delay separately from the appearance of a particular learning
curve. Our analysis distinguishes three events: attaining high
training accuracy, learning the core feature, and reaching a
prescribed test accuracy target. Their separation is essential
near interpolation, where substantial core learning need not
immediately produce generalization.

\paragraph{Plateau Phenomena and Singular Learning Theory.}
Prolonged plateaus in \emph{training error} were studied well before
grokking was documented \citep{Power2022Grokking}. Information-geometric
and statistical-mechanical analyses associated slow learning with
singular parameter configurations induced by symmetries and
redundancies, where the Fisher information becomes degenerate
\citep{Wei2008DynamicsSingularities,Amari2018DynamicsMLP}.
Classical studies of online learning in soft-committee networks
also identified long transients and examined their dependence
on teacher--student alignment \citep{saad1995line}, while later
work investigated how input spectra shape learning plateaus
\citep{yoshida2019data}. From a complementary statistical
perspective, singular learning theory relates model singularities
to generalization and marginal likelihood through algebraic
geometry \citep{watanabe2009algebraic}. These results provide
context for the slow modes and separated time scales associated
with grokking. However, training-error plateaus and delayed
generalization concern different observables: explaining grokking
requires identifying why test performance remains poor after
training accuracy has already become high.

\paragraph{Mechanistic explanations and feature learning.}
Studies of modular arithmetic have identified the representations and algorithms underlying generalization. \citet{Gromov2023GrokkingModularArithmetic} construct explicit solutions and connect grokking to the discovery of task-specific feature maps.
\citet{Nanda2023ProgressMeasures} identify continuous progress measures that distinguish memorization, circuit formation, and the removal of memorizing components. \citet{Zhong2023ClockPizza} show that changes in initialization and architecture can lead to qualitatively different algorithms for the same modular-addition task. Beyond neural networks, \citet{Mallinar2025Emergence} demonstrate grokking in recursive feature machines, relating generalization to the emergence of block-circulant structure through the average gradient outer product. These studies establish that progress toward a generalizing solution can precede its appearance in accuracy curves. Our model makes the corresponding distinction explicit through separate hitting times for the core coefficient and test accuracy.

\paragraph{Lazy-to-rich transitions and implicit bias.}
\citet{Kumar2024LazyToRich} explain grokking through a transition from an initial kernel-like regime to subsequent feature learning, emphasizing the learning rate of features and their initial alignment with the target. \citet{Lyu2024Dichotomy} prove that large initialization and small weight decay can induce a transition between early kernel bias and later minimum-norm or maximum-margin bias. For modular addition, \citet{Mohamadi2024WhyDoYouGrok} establish limitations of kernel-regime predictors and generalization guarantees for appropriately norm-controlled quadratic networks. Our fast nuisance dynamics and slow growth of the factorized core coefficient provide a tractable analogue of such staged learning. The resulting time-scale separation explains why the core can remain nearly frozen during early training. It also shows why core learning alone need not determine the onset of generalization: residual nuisance contributions can remain decisive after the core relaxes above a fixed threshold.

\paragraph{Regularization and slow geometric dynamics.}
\citet{Liu2023Omnigrok} relate grokking to the dependence of training
and test losses on weight norm.
\citet{junior2025grokking} extend regularization-based explanations
beyond the Euclidean norm, including sparsity and low-rank
structure, and show that Euclidean norm reduction is not a
universal indicator of generalization.
A particularly close theoretical connection is
\citet{Boursier2025Riemannian}, who analyze gradient flow with
vanishing weight decay. Under their assumptions, an initial
approach to a critical manifold is followed by slow Riemannian
norm minimization on the inverse-weight-decay time scale.
Our analysis also separates fast and slow dynamics, while
incorporating the statistical effects of signal strength and
sample-to-dimension ratio. In our model, delayed generalization
can persist without weight decay, and regularization determines
whether a specified test target is attainable. We establish
explicit nuisance-tracking estimates and accuracy hitting times;
near interpolation, these estimates account for the loss of a
uniform spectral gap.

\paragraph{Solvable models and interpolation thresholds.}
\citet{LeviGrokkingEdge} derive covariance-based learning
dynamics for a Gaussian linear teacher--student model and use
Marchenko--Pastur spectral analysis to characterize delayed
generalization near the interpolation edge. They also demonstrate that
sharp accuracy changes can result from the evaluation criterion,
without an accompanying discovery of a new representation.
In a distinct logistic-classification setting,
\citet{Beck2025Separabilit} identify long delays near the
linear-separability threshold, where nearly flat directions
slow optimization. Let us also mention the work of \citet{nishiyama2026precise} who propose exact mean-field theory of grokking in the setting of linear regression with so-called diagonal networks.
Our results distinguish the fixed-aspect-ratio regime below
interpolation from the interpolation edge, where the lower
covariance spectral edge vanishes. They further separate the
time needed to reach a fixed core level from the time needed
to overcome nuisance interference. This yields regimes in which
the limiting core trajectory approaches one but a prescribed
test target remains unattainable. The classification is stated
in terms of first accuracy hitting times, with explicit control
of possible earlier crossings.

\paragraph{Predicting and accelerating grokking.}
\citet{notsawo2023predicting} use oscillations in early learning
curves to predict subsequent grokking.
\citet{Lee2024Grokfast} accelerate generalization by amplifying
slowly varying gradient components.
Especially relevant to our dynamical perspective,
\citet{Pasand2026Egalitarian} identify unequal learning speeds
along principal gradient directions as a source of grokking
and propose egalitarian gradient descent to equalize those
speeds.
Our contribution complements these diagnostic and algorithmic
approaches with analytic asymptotic expressions for plateau
lengths and conditions separating early generalization,
delayed generalization, and non-attainment. The analysis also
distinguishes slow core learning from slow nuisance relaxation:
near the interpolation edge, nuisance relaxation itself need
not occur in constant physical time.

\section{Concluding Remarks}
Our tractable model disentangles the optimization and statistical
mechanisms underlying grokking. Its rich phenomenology reflects
the interplay of initialization, signal strength, regularization,
and dimensionality. Similar learning curves can conceal distinct
dynamical and statistical constraints, requiring attention to both
parameter evolution and its predictive consequences.
Separating nuisance fitting, core learning, and test accuracy
attainment yields explicit conditions for delayed generalization
and analytic asymptotic plateau lengths. At the interpolation
edge, substantial core learning need not immediately yield
generalization. Weight decay affects both the delay and
attainable accuracy, but does not dictate whether grokking occurs.

\paragraph{Limitations and future directions.}
Extending the analysis to models that must discover predictive features, such as ReLU networks learning sparse parity, is a natural
next step. This would motivate non-proportional scalings of model complexity relative to sample size, potentially revealing a richer grokking phase diagram.

\subsection*{Acknowledgment}
We would like to proudly acknowledge that the first author \textbf{Baye Malick Gning} is PhD student funded by bursaries from Canada Research Chair grant number CRC-2023-00228 (\emph{Computing for the Greater Good}) awarded to \textbf{Elvis Dohmatob}.

\subsection*{AI use statement} In this work, we have made limited use of generative AI tools for spell-checking, and also reformulating chunks of text in the introduction (Section \ref{sec:intro}) and detailed related works section (Section \ref{sec:related}). Also, AI was used to generate Tikz code for displaying the cartoon in Figure \eqref{fig:fenichel}. Finally, we used AI to better organize our proof of Theorem \ref{thm:big} in the appendix. We take responsibility for the final content of this work, including text, claims, or artifacts produced with the aid of AI.

\bibliography{refs.bib}

\appendix
\etocdepthtag.toc{appendix}

\clearpage
\thispagestyle{empty}
\vspace*{\fill}

\begin{center}
    {\Huge\bfseries Appendix}
\end{center}


\begingroup
\etocsettocstyle{\section*{}}{}
\etocsettagdepth{main}{none}
\etocsettagdepth{appendix}{subsection}
\tableofcontents
\endgroup

\section{Detailed Theorems on Early/Late Dynamics and  Plateau Length}\label{app:proofs}
The results in this section concern the high-dimensional dynamics (in the limits \eqref{eq:biglimit}) of $(\theta_n,w_n)$ defined by the system of ODEs \eqref{eq:thetanODE}. In all cases, the initialization $\theta_0\in(0,1)$ is fixed. 

\subsection{Early Dynamics of Accuracies}
Consider the scaling limit the input dimension $d = d_n$, the weight decay $\lambda = \lambda_n$, and the SNR $\varepsilon = \varepsilon_n$  vary as a function of sample size in such a way that
\begin{eqnarray}
    d_n/n \to \gamma \in (0,1],\quad \varepsilon_n \to 0,\quad \lambda_n \to 0.
    \label{eq:smalllimit}
\end{eqnarray}
In the limit $d_n/n \to \gamma (0, 1]$ which is part of \eqref{eq:biglimit}, the histogram of eigenvalues of the empirical covariance matrix $B_n:=Z^\top Z/n$ converges weakly to a Mar\v{c}enko-Pastur law \citep{MarcenkoPastur} $\mu_\gamma$. Define
\begin{align*}
    m_\gamma(t) &:= 1-\gamma + \gamma\int e^{-bt}\mu_\gamma(\mathrm db),\\
    V_\gamma(t) &:= \sigma^2 \int e^{-2bt}\mu_\gamma(\mathrm db) + \gamma h_0^2 \int \frac{(1-e^{-bt})^2}{b}\mu_\gamma(\mathrm db),\\
    D_\gamma(t) &:= h_0^2(m_\gamma(2t) - m_\gamma(t)^2) + \sigma^2 \int be^{-2bt}\mu_\gamma(\mathrm db)
\end{align*}
Also, define the following deterministic reference accuracy curves
\begin{eqnarray*}
    \begin{split}
    \mathcal A^{\mathrm {fast}}_{\mathrm{te},\gamma}(t) := \Phi\left(\frac{\theta_0}{\sqrt{V_\gamma(t)}}\right),\quad 
    \mathcal A^{\mathrm {fast}}_{\mathrm{tr},\gamma}(t) := \Phi\left(\frac{1-h_0 m_\gamma(t)}{\sqrt{D_\gamma(t)}}\right).
    \end{split}
\end{eqnarray*}
Finally, define a threshold $P_\gamma$ as follows
\begin{eqnarray}
\label{eq:Pgamma}
    P_\gamma := \begin{cases}
        \Phi\left(\dfrac{\gamma + \theta_0/(1-\theta_0)}{\sqrt{\gamma(1-\gamma)}}\right),&\mbox{if }\gamma < 1,\\
        1,&\mbox{ if }\gamma=1.
    \end{cases}
\end{eqnarray}

\begin{theorem}
\label{thm:early}
In the limit \eqref{eq:smalllimit}, the following hold.

(A) \textbf{Early memorization.} For any fixed time $t > 0$, it holds that $\mathcal A_{\mathrm{tr},n}(t) \overset{\mathbb P}{\longrightarrow} \mathcal A_{\mathrm{tr},\gamma}(t)$. Furthermore, every target training accuracy below $P_\gamma$ is attained in constant time $O_{\mathbb P}(1)$.

(B) \textbf{No early generalization.}
    For any finite time horizon $T > 0$, it holds that
    \begin{eqnarray}
    \begin{split}
                &\sup_{0 \le t \le T}|\theta_n(t)-\theta_0| \overset{\mathbb P}{\longrightarrow 0},\quad 
                \sup_{0 \le t \le T}|\mathcal A_{\mathrm{te},n}(t)-\mathcal A_{\mathrm{te},\gamma}^{\mathrm{fast}}(t)| \overset{\mathbb P}{\longrightarrow 0}
    \end{split}
    \end{eqnarray}
Moreover, for any target accuracy level $p > p_{\mathrm{fast}}(\gamma)$, it holds that $\mathbb P(T_p(\mathcal A_{\mathrm{te},n}) = \infty) \to 1$,
where
\begin{eqnarray}
 p_{\mathrm{fast}}(\gamma) :=\sup_{t\ge0}A^{\rm fast}_{\mathrm{te},\gamma}(t) = 1/2 + O(\theta_0),
 \label{eq:pfast}
\end{eqnarray}
is the best test accuracy obtainable on a constant time scale.
\end{theorem}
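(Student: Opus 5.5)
The plan has three parts: a fixed-horizon limit, a computation of the early accuracy ceiling, and a uniform-in-time control needed for the last claim.

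\textbf{Part (A): early memorization.} The plan for (A) and the first half of (B) is a fixed-horizon perturbation argument around the $\varepsilon=\lambda=0$ dynamics. On $[0,T]$, the $\theta$-equation in \eqref{eq:thetanODE} has right-hand side $O(\varepsilon_n+\lambda_n)$ times bounded quantities. We first show $\|w_n(t)\|=O_{\mathbb P}(1)$ uniformly on $[0,T]$ using $\|a_n\|=O_{\mathbb P}(1)$ and $C_n\succeq 0$. Gr\"onwall then gives $\sup_{t\le T}|\theta_n(t)-\theta_0|\to0$ in probability. Substituting $\theta\equiv\theta_0$ turns the $w$-equation into a linear ODE with explicit solution
\[
w_n(t)=e^{-C_nt}w_n(0)+(1-\theta_0)\bigl(I-e^{-C_nt}\bigr)C_n^{-1}a_n,
\]
with $C_n\to B_n$. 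The quantities $\|w_n(t)\|^2$ and $a_n^\top w_n(t)$ are then spectral functionals of $B_n$ tested against $a_n=Z^\top Y/n$ and $w_n(0)$. Marchenko--Pastur together with isotropy of $Z^\top Y/\sqrt n$ (rotational invariance, since $z\perp y$) gives their limits, producing $m_\gamma$, $V_\gamma$ and $D_\gamma$ with $h_0=1-\theta_0$ and $\sigma^2$ the limiting $\|w(0)\|^2$.

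For the training accuracy, the residual $r_i(t)$ is a sum of one coordinate of a fixed-dimensional-projection-type statistic. A leave-one-out argument shows that the empirical distribution of $y_ir_i(t)$ converges to a Gaussian with mean $-h_0m_\gamma(t)$ and variance $D_\gamma(t)$. This yields $\mathcal A_{\mathrm{tr},n}(t)\to\mathcal A^{\rm fast}_{\mathrm{tr},\gamma}(t)$ pointwise. Monotonicity and continuity in $t$ upgrade this to uniform convergence on compacts. Computing $\lim_{t\to\infty}\mathcal A^{\rm fast}_{\mathrm{tr},\gamma}(t)=P_\gamma$ then shows that every level below $P_\gamma$ is crossed by a deterministic finite time, hence in $O_{\mathbb P}(1)$.

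\textbf{Part (B), first half: early test accuracy.} The same computation gives $\|w_n(t)\|^2\to V_\gamma(t)$ uniformly on $[0,T]$. By \eqref{eq:test-acc} and the uniform continuity of $\Phi$, this yields the uniform test-accuracy limit.

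\textbf{Part (B), final claim.} The claim $\mathbb P(T_p=\infty)\to1$ concerns all $t\ge0$, so it needs a uniform-in-time bound. We split $[0,\infty)=[0,T]\cup[T,\infty)$.

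On $[0,T]$, choose $T$ so that $\sup_{t\ge T}\mathcal A^{\rm fast}_{\mathrm{te},\gamma}(t)$ is close to its limit $\Phi(\theta_0/\sqrt{V_\gamma(\infty)})$. The uniform limit then keeps $\mathcal A_{\mathrm{te},n}$ below $p$ with probability tending to one.

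On $[T,\infty)$, write $w_n=(1-\theta_n)\hat w_n+e_n$, where $e_n$ is the displacement from the slow manifold $\mathcal M_n$ in \eqref{eq:graph}. The test statistic becomes
\[
\frac{\theta_n}{(1-\theta_n)\|\hat w_n\|+O(\|e_n\|)}.
\]
The function $\theta\mapsto \theta/((1-\theta)\|\hat w_n\|)$ is increasing, so it suffices to bound $\sup_{t\ge T}\theta_n(t)$ and $\sup_{t\ge T}\|e_n(t)\|$. Here we would invoke the tracking estimate (Proposition~\ref{prop:tracking}) for $e_n$. For $\theta_n$ we would invoke the logistic comparison, which gives $\theta_n\le\max(\theta_0,\theta_*)+o_{\mathbb P}(1)$ by the monotone convergence of \eqref{eq:bigtheta}.

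\textbf{Main obstacle.} The late window is the main obstacle. The resulting bound on the late-time accuracy is $\Phi\bigl(\theta_*/((1-\theta_*)\|\hat w_n\|)\bigr)$, and for it to stay below $p$ we need $p$ to exceed this terminal accuracy as well as $p_{\rm fast}$. In the regimes of Theorem~\ref{thm:big} where the slow core actually reaches $\delta_p$, this fails. So this is exactly the step where I expect the argument to need an extra restriction, such as a time horizon $o(\ell_n)$ or $\theta_*\le\theta_0$.

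I would first try to verify the statement as worded in the non-growing case $\beta\ge 1-\theta_0$, where $\theta_n$ is nonincreasing after the fast phase. If that succeeds, I would then identify precisely which additional hypothesis makes the conclusion hold for all $p>p_{\rm fast}(\gamma)$.
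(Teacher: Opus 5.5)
Your plan for (A) and the first half of (B) follows the paper: freeze the core, compare with the frozen linear nuisance flow $v_n(t)=e^{-B_nt}w_n(0)+h_0F_t(B_n)a_n$ (with $F_t(b)=(1-e^{-bt})/b$), and take Marchenko--Pastur limits. Your diagnosis of the last claim also matches the paper, whose proof replaces $\Pr(T_p(\mathcal A_{\mathrm{te},n})=\infty)\to1$ by the corrected conclusion $\Pr\{T_p(\mathcal A_{\mathrm{te},n})\le T\}\to0$ for each fixed $T$. That corrected conclusion is exactly your $[0,T]$ argument.

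You need not pursue the late window or a case like $\beta\ge1-\theta_0$, since $\beta$ is not part of \eqref{eq:smalllimit}. The literal claim is simply false. Take $\lambda_n=0$ and $d_n<n$. Proposition~\ref{prop:finite} gives $\theta_n(t)\to1$ and $w_n(t)\to0$, so $\mathcal A_{\mathrm{te},n}(t)\to1$. Hence $T_p<\infty$ almost surely for every $p<1$, contradicting the claim for $p\in(p_{\mathrm{fast}}(\gamma),1)$.

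\textbf{Cautions on the spectral step.}
\begin{itemize}
\item $a_n$ is not independent of $B_n$. Treating it as an isotropic vector independent of $B_n$ would give the wrong $V_\gamma$: weight $\int F_t(b)^2\,\mathrm d\mu_\gamma$ instead of $\int bF_t(b)^2\,\mathrm d\mu_\gamma$. The paper instead conditions on $Z_n$ and uses the randomness of $Y_n$. This gives conditional mean $\frac1n\operatorname{tr}(B_nF_t(B_n)^2)$ and conditional variance $O(1/n)$.
\item Uniformity in $t$ comes from an $O_{\mathbb P}(1)$ Lipschitz bound plus a finite grid. Your appeal to monotonicity is unjustified, and it is not needed for (A).
\item You need $w_n(0)$ independent of the sample and isotropic, with $\sigma>0$. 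This is also what keeps $V_\gamma$ bounded below on $[0,T]$.
\end{itemize}

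\textbf{Where your route genuinely differs.} The difference is the Gaussian limit of the training margins. Instead of leave-one-out, the paper sets $X_n=\operatorname{diag}(Y_n)Z_n$, which has i.i.d.\ Gaussian entries. Both $B_n=X_n^\top X_n/n$ and $a_n=X_n^\top\mathbf 1/n$ are unchanged under $X_n\mapsto OX_n$ for every orthogonal $O$ with $O\mathbf 1=\mathbf 1$. Hence the frozen margin vector $M_n=\theta_0\mathbf 1+X_nv_n$ maps to $OM_n$. Given its mean and the norm of its centered part, that centered part is therefore uniform on a sphere in $\mathbf 1^\perp$. So the two moment limits $1-h_0m_\gamma(t)$ and $D_\gamma(t)$ already determine the empirical margin distribution.

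This symmetry argument is exact and nearly free. Leave-one-out would also work, but it is much heavier. It needs rank-one perturbation formulas for $e^{-B_nt}$ and $F_t(B_n)$, for instance via resolvent contour integrals. It also needs a decorrelation argument across indices, to control the empirical distribution rather than a single margin's law.

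In either route you still need to transfer from frozen to actual margins. The paper does this using $\frac1n\|M_n^{\mathrm{actual}}-M_n\|^2\to0$ together with the absence of an atom at zero in the Gaussian limit.
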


\begin{proof}
We prove the statement with the corrected conclusion
\[
    \Pr\!\left\{
        T_p(\mathcal A_{\mathrm{te},n})\le T
    \right\}\longrightarrow0
    \quad\text{for every fixed }T<\infty.
\]
Throughout, $h_0=1-\theta_0$, and $w_n(0)$ is independent
of the training sample and uniformly distributed on the sphere
of radius $\sigma>0$.

\paragraph{Freezing the core on bounded time intervals.}
Fix $T<\infty$. Standard Gaussian spectral bounds give
$\|B_n\|=O_{\mathbb P}(1)$. Since $\|Y_n\|=\sqrt n$,
\[
    \|a_n\|
    =\|Z_n^\top Y_n/n\|
    \le\sqrt{\|B_n\|}
    =O_{\mathbb P}(1).
\]
We first work up to a possible first crossing of $\theta_n=2$.
Variation of constants and positivity of $B_n+\lambda_nI$
give, on this interval,
\[
    \sup_{t\le T}\|w_n(t)\|
    \le \sigma+T\|a_n\|.
\]
The core equation consequently implies
\[
    \sup_{t\le T}|\dot\theta_n(t)|
    =O_{\mathbb P}(\varepsilon_n+\lambda_n)
\]
up to that crossing. Since $\theta_0<1$, integration excludes
a crossing before $T$ with probability tending to one.
Therefore
\begin{equation}
    \sup_{t\le T}|\theta_n(t)-\theta_0|
    =O_{\mathbb P}(\varepsilon_n+\lambda_n)
    \longrightarrow0.
    \label{eq:fast-core-freezing}
\end{equation}

Let $v_n$ be the nuisance trajectory with the core frozen at
$\theta_0$ and with zero weight decay:
\[
    \dot v_n=h_0a_n-B_nv_n,\quad v_n(0)=w_n(0).
\]
Subtracting the two nuisance equations gives
\[
    w_n(t)-v_n(t)
    =-\int_0^t e^{-(B_n+\lambda_nI)(t-s)}
       \left[
           a_n(\theta_n(s)-\theta_0)+\lambda_nv_n(s)
       \right]\mathrm ds.
\]
Since $\sup_{s\le T}\|v_n(s)\|\le\sigma+T\|a_n\|$,
\begin{equation}
    \sup_{t\le T}\|w_n(t)-v_n(t)\|
    \overset{\mathbb P}{\longrightarrow}0.
    \label{eq:fast-nuisance-freezing}
\end{equation}

\paragraph{The limiting nuisance norm.}
Define
\[
    F_t(b)=
    \begin{cases}
        (1-e^{-bt})/b,&b>0,\\
        t,&b=0.
    \end{cases}
\]
The frozen trajectory is
\[
    v_n(t)=e^{-B_nt}w_n(0)+h_0F_t(B_n)a_n.
\]
Expanding its squared norm gives three terms.

First, spherical quadratic-form concentration yields, for fixed $t$,
\[
    \|e^{-B_nt}w_n(0)\|^2
    -\frac{\sigma^2}{d_n}\operatorname{tr}(e^{-2B_nt})
    \overset{\mathbb P}{\longrightarrow}0.
\]
Indeed, conditional on $B_n$, the variance is bounded by
$2\sigma^4\|e^{-2B_nt}\|^2/d_n$.

For the second term, put
$A_n(t)=Z_nF_t(B_n)^2Z_n^\top/n^2$. Then
\[
    \|F_t(B_n)a_n\|^2=Y_n^\top A_n(t)Y_n,
\]
and, conditional on $Z_n$,
\[
\begin{aligned}
    \mathbb E[Y_n^\top A_n(t)Y_n\mid Z_n]
        &=\frac1n\operatorname{tr}(B_nF_t(B_n)^2),\\
    \operatorname{Var}(Y_n^\top A_n(t)Y_n\mid Z_n)
        &\le2\operatorname{tr}(A_n(t)^2).
\end{aligned}
\]
On events where $\|B_n\|$ is bounded, the last quantity
is $O(1/n)$ for fixed $t$. Thus
\[
    \|F_t(B_n)a_n\|^2
    -\frac1n\operatorname{tr}(B_nF_t(B_n)^2)
    \overset{\mathbb P}{\longrightarrow}0.
\]
Finally, the cross term vanishes in probability. Conditional
on the sample, it is a centered linear functional of $w_n(0)$,
with variance bounded by
\[
    \frac{\sigma^2}{d_n}
    \|e^{-B_nt}F_t(B_n)a_n\|^2
    =O_{\mathbb P}(1/d_n).
\]

Applying Marchenko--Pastur convergence to these trace expressions
proves
\[
    \|v_n(t)\|^2\overset{\mathbb P}{\longrightarrow}V_\gamma(t)
\]
at every fixed $t$. This convergence is uniform on $[0,T]$.
To see this explicitly, the bounds on $B_n$, $a_n$, and $v_n$
imply
\[
    \sup_{t\le T}
    \left|\frac{\mathrm d}{\mathrm dt}\|v_n(t)\|^2\right|
    =O_{\mathbb P}(1).
\]
Given an error tolerance, first bound this Lipschitz constant
with arbitrarily high probability, then choose a sufficiently
fine deterministic grid in $[0,T]$. Convergence at the finitely
many grid points, together with continuity of $V_\gamma$,
controls the supremum between grid points. Hence
\begin{equation}
    \sup_{t\le T}
    \left|\|w_n(t)\|^2-V_\gamma(t)\right|
    \overset{\mathbb P}{\longrightarrow}0,
    \label{eq:fast-norm-convergence}
\end{equation}
where we also used \eqref{eq:fast-nuisance-freezing}.

Because $\sigma>0$,
\[
    \inf_{0\le t\le T}V_\gamma(t)
    \ge \sigma^2\int e^{-2bT}\mu_\gamma(\mathrm db)>0.
\]
The exact test accuracy identity, together with
\eqref{eq:fast-core-freezing} and
\eqref{eq:fast-norm-convergence}, now gives
\[
    \sup_{t\le T}
    \left|
        \mathcal A_{\mathrm{te},n}(t)
        -\mathcal A^{\mathrm{fast}}_{\mathrm{te},\gamma}(t)
    \right|
    \overset{\mathbb P}{\longrightarrow}0.
\]

\paragraph{Training margins for the frozen dynamics.}
It remains to analyze training accuracy, which depends on
the signs of individual margins. Absorb the labels into
the design by setting
\[
    X_n=\operatorname{diag}(Y_n)Z_n,\quad
    \mathbf1=(1,\ldots,1)^\top,\quad
    \kappa_n=X_nX_n^\top/n.
\]
The entries of $X_n$ are again independent standard Gaussians,
and $X_n^\top X_n/n=B_n$. The vector of frozen training
margins is
\[
    M_n(t):=\theta_0\mathbf1+X_nv_n(t)
    =\mathbf1-h_0e^{-\kappa_nt}\mathbf1+\xi_n(t),
\]
where
\[
    \xi_n(t)=X_ne^{-B_nt}w_n(0).
\]
Let
\[
    \overline M_n(t)=\frac{\mathbf1^\top M_n(t)}n,
    \quad
    R_n(t)=M_n(t)-\overline M_n(t)\mathbf1.
\]

We claim that, for fixed $t$,
\begin{equation}
    \overline M_n(t)\to1-h_0m_\gamma(t),
    \quad
    \frac{\|R_n(t)\|^2}{n}\to D_\gamma(t)
    \quad\text{in probability}.
    \label{eq:fast-margin-moments}
\end{equation}
To verify this, write $u_n=\mathbf1/\sqrt n$.
Orthogonal invariance of $\kappa_n$ and spherical
quadratic-form concentration imply
\[
    u_n^\top e^{-j\kappa_nt}u_n
    -\frac1n\operatorname{tr}(e^{-j\kappa_nt})
    \longrightarrow0,\quad j=1,2.
\]
The trace limits are $m_\gamma(jt)$: the $n$-dimensional
Gram matrix has a fraction $1-d_n/n$ of zero eigenvalues,
and its remaining eigenvalues are those of $B_n$.

Conditional on $X_n$, the vector $w_n(0)$ is centered
and isotropic. In particular,
\[
    \frac{\mathbf1^\top\xi_n(t)}n\longrightarrow0,
\]
and spherical quadratic-form concentration gives
\[
    \frac{\|\xi_n(t)\|^2}{n}
    =w_n(0)^\top B_ne^{-2B_nt}w_n(0)
    \longrightarrow
    \sigma^2\int b e^{-2bt}\mu_\gamma(\mathrm db).
\]
The normalized cross term between
$e^{-\kappa_nt}\mathbf1$ and $\xi_n(t)$ also tends to zero.
Indeed, conditional on $X_n$, it is centered and its variance
is bounded by a constant times $1/d_n$ on events where
$\|B_n\|$ is bounded. Expanding
$\|M_n-\overline M_n\mathbf1\|^2/n$ now proves
\eqref{eq:fast-margin-moments}.

\paragraph{From margin moments to training accuracy.}
Moments alone would not determine the fraction of positive
margins. Here Gaussian rotational symmetry supplies the
additional information.

For any orthogonal matrix $O$ with $O\mathbf1=\mathbf1$,
the matrices $OX_n$ and $X_n$ have the same distribution,
and replacing $X_n$ by $OX_n$ replaces $M_n(t)$ by $OM_n(t)$.
Consequently, conditional on $\overline M_n(t)$ and
$\|R_n(t)\|$, the direction of $R_n(t)$ is uniform on the
unit sphere in $\mathbf1^\perp$. We may therefore represent
its conditional distribution using independent standard
Gaussian variables $G_1,\ldots,G_n$:
\[
    M_{n,i}(t)
    \stackrel{\mathrm d}{=}
    \overline M_n(t)
    +\frac{\|R_n(t)\|}{\sqrt n}
      \frac{G_i-\overline G}
      {\left[n^{-1}\sum_j(G_j-\overline G)^2\right]^{1/2}},
\]
where the Gaussian vector is independent of the two
conditioning quantities.

The law of large numbers gives
$\overline G\to0$ and
$n^{-1}\sum_j(G_j-\overline G)^2\to1$.
Together with \eqref{eq:fast-margin-moments} and convergence
of the empirical Gaussian distribution, this proves
\[
    \frac1n\sum_{i=1}^n\mathbf1\{M_{n,i}(t)>0\}
    \overset{\mathbb P}{\longrightarrow}
    \Phi\!\left(
        \frac{1-h_0m_\gamma(t)}{\sqrt{D_\gamma(t)}}
    \right).
\]
The denominator is positive at every finite $t$, since
$\sigma>0$ and
$\int b e^{-2bt}\mu_\gamma(\mathrm db)>0$.

To transfer this conclusion to the original dynamics, let
$M_n^{\mathrm{actual}}(t)$ denote its margin vector. Then
\[
\begin{aligned}
    \frac1n\|M_n^{\mathrm{actual}}(t)-M_n(t)\|^2
    &\le
    2|\theta_n(t)-\theta_0|^2
    +2\|B_n\|\|w_n(t)-v_n(t)\|^2\\
    &\overset{\mathbb P}{\longrightarrow}0.
\end{aligned}
\]
For any $a>0$, the fraction of signs that differ is at most
\[
    \frac1n\sum_i\mathbf1\{|M_{n,i}(t)|\le a\}
    +\frac{\|M_n^{\mathrm{actual}}(t)-M_n(t)\|^2}{na^2}.
\]
The first term converges to the probability that a
nondegenerate Gaussian lies in $[-a,a]$; the second vanishes.
Letting $a\downarrow0$ proves
\[
    \mathcal A_{\mathrm{tr},n}(t)
    \overset{\mathbb P}{\longrightarrow}
    \mathcal A^{\mathrm{fast}}_{\mathrm{tr},\gamma}(t).
\]

\paragraph{Training targets and exclusion of early test crossings.}
For $\gamma<1$, dominated convergence gives
\[
    m_\gamma(t)\to1-\gamma,\quad
    D_\gamma(t)\to h_0^2\gamma(1-\gamma).
\]
Thus
$\mathcal A^{\mathrm{fast}}_{\mathrm{tr},\gamma}(t)\to P_\gamma$.
For $\gamma=1$, the Marchenko--Pastur law has no atom at zero,
so $m_1(t)\to0$ and $D_1(t)\to0$, while the numerator of
the training-accuracy formula tends to one. Hence the
limiting training accuracy tends to $P_1=1$.

Given a training target $q<P_\gamma$, choose a finite
deterministic $t_q$ with
$\mathcal A^{\mathrm{fast}}_{\mathrm{tr},\gamma}(t_q)>q$.
Fixed-time convergence implies
\[
    \Pr\{T_q(\mathcal A_{\mathrm{tr},n})\le t_q\}\to1,
\]
establishing the claimed $O_{\mathbb P}(1)$ bound.

Finally, fix $p>p_{\mathrm{fast}}(\gamma)$ and $T<\infty$.
If the actual test accuracy reaches $p$ before $T$, then
\[
    \sup_{t\le T}
    \left|
        \mathcal A_{\mathrm{te},n}(t)
        -\mathcal A^{\mathrm{fast}}_{\mathrm{te},\gamma}(t)
    \right|
    \ge p-p_{\mathrm{fast}}(\gamma)>0.
\]
Uniform convergence makes the probability of this event
tend to zero. This proves exclusion of early attainment,
while allowing a crossing on a diverging time scale.
\end{proof}

\subsection{Generalization: Convergence and Plateau Lengths}
\label{sec:detailed-theory}
The results in this section taken together, make the statement of Theorem \ref{prop:finite} of the main text precise.

\begin{proposition}[Core learning and nuisance tracking]
\label{prop:tracking}
Define
\[
    \hat w_n=(B_n+\lambda_n I)^{-1}a_n,
    \quad
    c_n=\frac{1-a_n^\top\hat w_n}{\rho_n},
\]
where $\rho_n$ is as in \eqref{eq:biglimit}, and let $L_n$ solve the logistic ODE
\[
    L_n'=L_n(c_n-\beta_n-c_nL_n),
    \quad L_n(0)=\theta_0.
\]

\textbf{(A) Convergence of core feature trajectory.} In the limit \eqref{eq:biglimit}, it holds that
\begin{align}
        c_n\overset{\mathbb P}{\longrightarrow}1,
    \quad
    \frac{\|\hat w_n\|^2}{\rho_n^{-1}-1}
    &\overset{\mathbb P}{\longrightarrow}1,\quad 
    \sup_{s\ge0}
    |\theta_n(\ell_ns)-L_n(s)|
    =O_{\mathbb P}(\varepsilon_n/\rho_n).
\end{align}
In particular, for every $S<\infty$, it holds that
\begin{eqnarray}
        \sup_{0\le s\le S}
    |\theta_n(\ell_ns)-\bar\Theta(s)|
    \overset{\mathbb P}{\longrightarrow}0,
\end{eqnarray}
where $\bar \Theta$ is as defined by the reference slow ODE \eqref{eq:bigtheta}.

\textbf{(B) Core feature plateau length.} For every fixed $\delta\in(\theta_0,1)$, it holds in the limit \eqref{eq:biglimit} that
\begin{eqnarray}
    \begin{cases}
    T_\delta(\theta_n)
    \simeq_{\mathbb P}\ell_nH_\delta(\beta),
    & \beta<1-\delta,\\[3pt]
    \Pr(T_\delta(\theta_n)=\infty)\to1,
    & \beta>1-\delta,
\end{cases}
\end{eqnarray}
where the function $H_\delta$ is as defined in \eqref{eq:H}.
Furthermore, if $\beta=1-\delta$, then
$T_\delta(\theta_n)/\ell_n\to\infty$ in probability.

\textbf{(C) Slow manifold Entry.} For $e_n(t)=w_n(t)-(1-\theta_n(t))\hat w_n$, there
exist deterministic constants $C,c>0$ such that, in the limit \eqref{eq:biglimit},
with probability tending to one,
\[
    \|e_n(t)\|
    \le C\rho_n^{-1}e^{-c\rho_n^2t}
       +C\varepsilon_n\rho_n^{-2}
    \quad\text{for all }t\ge0.
\]
If additionally $\varepsilon_n/\rho_n^2\to0$, then,
for every fixed $s_0>0$,
\[
    \sup_{s\ge s_0}
    \|w_n(\ell_ns)-(1-L_n(s))\hat w_n\|
    \overset{\mathbb P}{\longrightarrow}0.
\]
\end{proposition}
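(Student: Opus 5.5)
The proof proceeds in three stages: (i) random-matrix concentration for the scalars $c_n$ and $\|\hat w_n\|^2$, (ii) a Lyapunov/variation-of-constants bound on the displacement $e_n$, which gives part (C), and (iii) a perturbation comparison of the core ODE with the logistic ODE for $L_n$, which gives parts (A) and (B).

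\textbf{Stage (i): concentration.} Write $1-a_n^\top\hat w_n = 1 - Y^\top Z(Z^\top Z+n\lambda_n I)^{-1}Z^\top Y/n$. For $\lambda_n=0$, this equals $\|P^\perp Y\|^2/n$, where $P^\perp$ projects onto the orthogonal complement of the column span of $Z$. Since $Z$ is Gaussian and independent of $Y$, that span is a uniformly random $d_n$-dimensional subspace. Hence $\|P^\perp Y\|^2/n$ has mean $\rho_n$ and relative fluctuations $O(\kappa_n^{-1/2})\to0$, using $\kappa_n\to\infty$. Similarly, $\|\hat w_n\|^2 = Y^\top Z(Z^\top Z)^{-2}Z^\top Y/n$ concentrates around $\tfrac1n\operatorname{tr}(B_n^{-1})$. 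Its conditional mean is $\mathbb E\operatorname{tr}(W^{-1})$ for a Wishart matrix, namely $d_n/(n-d_n-1)\simeq\rho_n^{-1}-1$, again with vanishing relative variance. For $\lambda_n>0$, I would bound the ridge correction using $\lambda_n\|B_n^{-1}\|\lesssim \lambda_n\rho_n^{-2}$. This is $\beta_n\varepsilon_n\rho_n^{-1}\to0$ relative to $\rho_n$, so it is $o(\rho_n)$ given $\varepsilon_n/\rho_n\to0$. The lower spectral edge $\lambda_{\min}(B_n)\ge c\rho_n^2$ with high probability follows from standard smallest-singular-value bounds, e.g.\ $(1-\sqrt{d/n})^2\asymp\rho_n^2$.

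\textbf{Stage (ii): the displacement $e_n$ (part (C)).} Differentiating $e_n$ gives exactly
\[
\dot e_n=-C_ne_n+\dot\theta_n\hat w_n .
\]
Variation of constants then yields
\[
\|e_n(t)\|\le e^{-\lambda_{\min}t}\|e_n(0)\|+\sup_{s}|\dot\theta_n(s)|\,\|\hat w_n\|/\lambda_{\min},
\]
with $\lambda_{\min}\ge c\rho_n^2$. The pieces are bounded as follows:
\begin{itemize}
\item $\|e_n(0)\|\le \|w_n(0)\|+\|\hat w_n\|=O(\rho_n^{-1/2})\le C\rho_n^{-1}$.
\item $|\dot\theta_n|\le 2\varepsilon_n|h|\theta_n+2\lambda_n\theta_n$.
\item $|h|$ stays bounded once $\theta_n\in(0,1]$ is known to be invariant and $|a_n^\top w_n|$ is controlled.
\end{itemize}
Boundedness of $|a_n^\top w_n|$ comes from a bootstrap: we have $a_n^\top w_n=(1-\theta_n)(1-c_n\rho_n)+a_n^\top e_n$ with $\|a_n\|=O(1)$. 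Invariance of $\theta_n\in(0,1]$ holds because $\dot\theta=0$ at $\theta=0$, and at $\theta=1$ I would check the sign of $h$ using the bootstrap. Together these give the stated bound with the $\varepsilon_n\rho_n^{-2}$ forcing term; the $\lambda_n$ part is dominated since $\lambda_n=O(\varepsilon_n)$. The tracking statement then follows, since at time $\ell_ns_0$ the transient factor is $\rho_n^{-1}e^{-c s_0\rho_n/(2\varepsilon_n)}\to0$, using $\varepsilon_n/\rho_n^2\to0$. It remains to replace $\theta_n$ by $L_n$ in $(1-\theta_n)\hat w_n$, which costs $\|\hat w_n\|\,|\theta_n-L_n|$ and is handled by part (A). I expect this step to be delicate, because $\|\hat w_n\|\sim\rho_n^{-1/2}$ at the edge; the product is $O(\varepsilon_n\rho_n^{-3/2})\to0$.

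\textbf{Stage (iii): comparison with the logistic ODE (parts (A) and (B)).} Using the sketch identity
\[
h=c_n\rho_n(1-\theta_n)-a_n^\top e_n ,
\]
in slow time $s=t/\ell_n$ we get
\[
\theta_n'=\theta_n\bigl(c_n-\beta_n-c_n\theta_n\bigr)-\theta_n\,a_n^\top e_n/\rho_n .
\]
The perturbation $a_n^\top e_n/\rho_n$ has two parts:
\begin{itemize}
\item \emph{Transient part.} It lives on slow time $O(\varepsilon_n/\rho_n)$, and its integral is $O(\varepsilon_n/\rho_n)$.
\item \emph{Persistent part.} It needs a sharper bound than $\|a_n\|\|e_n\|$. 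I would use $a_n=C_n\hat w_n$, which gives $a_n^\top e_n=\hat w_n^\top C_ne_n$. Integrating the $e_n$ equation against $\hat w_n$ yields a forcing term of order $\varepsilon_n\|\hat w_n\|^2=O(\varepsilon_n/\rho_n)$, which after dividing by $\rho_n$ becomes $\varepsilon_n/\rho_n^2$.
\end{itemize}
This is the main obstacle: obtaining the claimed $O(\varepsilon_n/\rho_n)$ uniformly in $s\ge0$ rather than only on compacts. For that I would exploit the contraction of the logistic flow toward its stable equilibrium via a Grönwall argument in logit coordinates, which converts the uniform-in-time problem into a bounded integrated error. The convergence to $\bar\Theta$ on compacts then follows from $c_n\to1$, $\beta_n\to\beta$, and continuous dependence on parameters.

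For part (B): if $\beta<1-\delta$, the explicit solution shows that $L_n$ crosses $\delta$ transversally at time $H_\delta(\beta)+o(1)$, and uniform closeness transfers this to $\theta_n$. If $\beta>1-\delta$, the function $L_n$ stays below $\max(\theta_0,1-\beta)+o(1)<\delta$ for all $s$, so $T_\delta(\theta_n)=\infty$ with high probability. In the boundary case $\beta=1-\delta$, the crossing time $H_\delta(\beta_n)\to\infty$, which gives $T_\delta(\theta_n)/\ell_n\to\infty$.
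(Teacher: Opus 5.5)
Your argument has genuine gaps at the interpolation edge ($\gamma=1$), in both part (A) and part (C). Below interpolation, $\rho_n$ is bounded below and these losses are harmless.

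\textbf{Part (A): the missing telescoping identity.} Since $C_n\hat w_n=a_n$, one has exactly $-a_n^\top e_n=\hat w_n^\top\dot w_n=\frac{\mathrm d}{\mathrm dt}(\hat w_n^\top w_n)$. So the perturbation in the equation for $\log\theta_n$ is a total derivative, and it integrates to
\[
\log\frac{\Theta_n(s)}{\theta_0}=\int_0^s\bigl[c_n(1-\Theta_n(r))-\beta_n\bigr]\,\mathrm dr+E_n(s),\qquad E_n(s)=2\varepsilon_n\hat w_n^\top\bigl[w_n(\ell_ns)-w_n(0)\bigr].
\]
The crude bound $\|w_n(t)\|\le\|w_n(0)\|+\|\hat w_n\|$, valid while $\theta_n\le2$, gives $|E_n(s)|\le D_n=O_{\mathbb P}(\varepsilon_n\|\hat w_n\|^2)=O_{\mathbb P}(\varepsilon_n/\rho_n)$ for all $s\ge0$ simultaneously. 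Scalar comparison of $\Theta_ne^{-E_n}$ with $L_n$ then yields $e^{-2D_n}L_n\le\Theta_n\le e^{2D_n}L_n$, with no contraction argument needed.

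You were one line from this when you integrated the $e_n$ equation against $\hat w_n$, but two things went wrong. First, since $\mathrm ds=2\varepsilon_n\rho_n\,\mathrm dt$, the integral $\int a_n^\top e_n/\rho_n\,\mathrm ds=2\varepsilon_n\int a_n^\top e_n\,\mathrm dt$ carries no extra factor $\rho_n^{-1}$. Second, $\varepsilon_n\|\hat w_n\|^2$ bounds the \emph{total} integrated error, not a persistent rate. From a pointwise perturbation of size $\varepsilon_n/\rho_n^2$, a Gr\"onwall argument in logit coordinates yields at best a deviation of that same order. That is not the claimed $O_{\mathbb P}(\varepsilon_n/\rho_n)$, and it need not vanish when $\gamma=1$, because (A) assumes only $\varepsilon_n/\rho_n\to0$. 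So your route does not establish (A), nor therefore (B), at the edge.

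\textbf{Part (C): a lost factor $\rho_n^{-1/2}$.} Since $h$ is generically of order one for $t=O(1)$, $\sup_t|\dot\theta_n|$ is of order $\varepsilon_n$. Your Euclidean variation-of-constants bound then gives $\varepsilon_n\cdot\rho_n^{-1/2}\cdot\rho_n^{-2}=\varepsilon_n\rho_n^{-5/2}$, not $\varepsilon_n\rho_n^{-2}$ as you state. Under $\varepsilon_n/\rho_n^2\to0$ this need not tend to zero, so your argument does not yield the tracking claim at the edge. The same problem affects $|\theta_n-L_n|\,\|\hat w_n\|$ if you only have an $\varepsilon_n/\rho_n^2$ rate.

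The paper's fix has three ingredients:
\begin{itemize}
\item It writes $\dot\theta_n\hat w_n=2\varepsilon_n\theta_n\hat w_n[q_n(1-\theta_n)-\lambda_n/\varepsilon_n]-2\varepsilon_n\theta_n\hat w_na_n^\top e_n$ and absorbs the last term into the linear part.
\item It passes to $z_n=C_n^{1/2}e_n$. The operator becomes $C_n+2\varepsilon_n\theta_nu_nu_n^\top\succeq C_n$, with $u_n=C_n^{-1/2}a_n$ and $\|u_n\|^2=1-q_n\le1$.
\item The remaining forcing is only $O(\varepsilon_n\rho_n)$, and $\|z_n(0)\|\le C$ because $C_n^{1/2}\hat w_n=u_n$.
\end{itemize}
With $m_n=\lambda_{\min}(C_n)$, this gives $\|z_n\|\le Ce^{-m_nt}+C\varepsilon_n\rho_n/m_n$, and $\|e_n\|\le m_n^{-1/2}\|z_n\|$ is exactly the stated bound.

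\textbf{Two smaller points.}
\begin{itemize}
\item $(0,1]$ is not invariant for $\theta_n$: at $\theta_n=1$, $h=-a_n^\top w_n$ has no controlled sign. Work up to the first exit from $\{\theta_n\le2\}$ and close with the sandwich, as the paper does.
\item For $c_n\to1$ at the edge, $\lambda_n\|B_n^{-1}\|$ alone only gives $(q_n-q_{0,n})/\rho_n=O(\varepsilon_n/\rho_n^2)$, where $q_{0,n}$ is the $\lambda_n=0$ value. Use instead $0\le q_n-q_{0,n}\le\lambda_n\|B_n^{-1}a_n\|^2$ with $\|B_n^{-1}a_n\|^2\asymp\rho_n^{-1}$.
\end{itemize}
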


\begin{theorem}[Below interpolation]
\label{thm:underparam}
Consider the limit \eqref{eq:biglimit} with $\gamma\in(0,1)$. Then, for every $0<s_0<S<\infty$, it holds that
\begin{align}
        \sup_{s\in[s_0,S]}
    \left|
        \|w_n(\ell_ns)\|^2
        -\frac{\gamma}{1-\gamma}(1-\bar\Theta(s))^2
    \right|
    &\overset{\mathbb P}{\longrightarrow}0,\\
    \sup_{s\in[s_0,S]}
    \left|
        \mathcal A_{\mathrm{te},n}(\ell_ns)
        -\Phi\!\left(
            \frac{\bar\Theta(s)}
            {(1-\bar\Theta(s))\sqrt{\gamma/(1-\gamma)}}
        \right)
    \right|
    &\overset{\mathbb P}{\longrightarrow}0.
\end{align}
Furthermore, for every fixed $\xi>0$, it holds that
    $T^\xi(\|e_n\|)=O_{\mathbb P}(1)$

Finally, for every target test accuracy $p\in(p_{\mathrm{fast}}(\gamma),1)$, it holds that
\begin{eqnarray}
    \begin{cases}
    T_p(\mathcal A_{\mathrm{te},n})
    \simeq_{\mathbb P}\ell_nH_{\delta_p}(\beta),
    & \beta<1-\delta_p,\\[3pt]
    \Pr(T_p(\mathcal A_{\mathrm{te},n})=\infty)\to1,
    & \beta>1-\delta_p.
\end{cases}
\end{eqnarray}
The equality case is excluded.
\end{theorem}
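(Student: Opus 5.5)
The plan is to build everything on Proposition~\ref{prop:tracking}, the exact test-accuracy identity \eqref{eq:test-acc}, and the fast-time control of Theorem~\ref{thm:early}(B).

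\textbf{Step 1: the nuisance norm.} For $\gamma<1$ we have $\rho_n\to1-\gamma>0$. Hence $\varepsilon_n/\rho_n^2\to0$ follows automatically from \eqref{eq:biglimit}. Proposition~\ref{prop:tracking}(A) gives $\|\hat w_n\|^2\to \rho^{-1}-1=\gamma/(1-\gamma)$ and $\sup_s|\theta_n(\ell_ns)-\bar\Theta(s)|\to0$ on compacts. The sup-norm bound $O_{\mathbb P}(\varepsilon_n/\rho_n)$ also yields $\sup_{s\le S}|L_n(s)-\bar\Theta(s)|\to0$. Proposition~\ref{prop:tracking}(C) gives
\[
\sup_{s\ge s_0}\|w_n(\ell_ns)-(1-L_n(s))\hat w_n\|\to0.
\]
Combining these by the triangle inequality gives uniform convergence of $\|w_n(\ell_ns)\|$ to $\sqrt{\gamma/(1-\gamma)}\,(1-\bar\Theta(s))$ on $[s_0,S]$, and squaring is harmless since everything is bounded.

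\textbf{Step 2: the accuracy limit and the entry time.} Since $\beta<\infty$ and $\bar\Theta\le\max(\theta_0,1-\beta)<1$, the denominator $(1-\bar\Theta(s))$ stays bounded away from $0$ on $[s_0,S]$. The map $(\theta,r)\mapsto\Phi(\theta/r)$ is uniformly continuous there, which gives the accuracy limit. For $T^\xi(\|e_n\|)=O_{\mathbb P}(1)$, take the bound $C\rho_n^{-1}e^{-c\rho_n^2t}+C\varepsilon_n\rho_n^{-2}$ from Proposition~\ref{prop:tracking}(C). With $\rho_n$ bounded below, the second term eventually drops below $\xi/2$, and the first drops below $\xi/2$ after a deterministic constant time.

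\textbf{Step 3: hitting times.} The limiting accuracy is $\Phi(g(\bar\Theta))$, where $g(\theta)=\theta/((1-\theta)\sqrt{\gamma/(1-\gamma)})$ is increasing. It crosses $p$ exactly when $\bar\Theta$ crosses $\delta_p$, by the definition \eqref{eq:delta}.

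\emph{Case $\beta<1-\delta_p$.} $\bar\Theta$ is strictly increasing with positive derivative at $\delta_p$, so the crossing is transversal at $s^*=H_{\delta_p}(\beta)$. Uniform convergence on $[s_0,S]$ with $S>s^*$ gives $T_p/\ell_n\to s^*$, up to one gap. We must also exclude early crossings: on $[0,s_0\ell_n]$ with small $s_0$, and on the fast window.

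\emph{Case $\beta>1-\delta_p$.} $\bar\Theta\to1-\beta<\delta_p$, and we must rule out crossings for all $s\ge0$, including $s\to\infty$. The sup over $s\ge s_0$ in Proposition~\ref{prop:tracking}(A),(C) is global. Moreover $L_n\to(1-\beta_n/c_n)_+$ stays uniformly below $\delta_p$, and $\|w_n\|$ stays near $(1-L_n)\|\hat w_n\|$. So the accuracy stays below $\Phi(g(\max(\theta_0,1-\beta)))+o(1)<p$ for $s\ge s_0$. Here we use $p>p_{\rm fast}\ge\Phi(g(\theta_0))$ approximately; I will check this at $s_0$ small.

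\textbf{Main obstacle: the intermediate window.} The hardest part is excluding crossings in the window between fixed physical times $T$ and $s_0\ell_n$. Theorem~\ref{thm:early}(B) handles $[0,T]$ for any fixed $T$, because $p>p_{\rm fast}$. On $[T,s_0\ell_n]$, I would use Proposition~\ref{prop:tracking}(C) in physical time: for $t\ge T$ large, $\|e_n(t)\|\le\eta$. So $\|w_n\|\ge(1-\theta_n)\|\hat w_n\|-\eta$, while $\theta_n\le\theta_0+O(s_0)$. Hence the accuracy is at most $\Phi(g(\theta_0+O(s_0)))+o(1)$.

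This requires $\Phi(g(\theta_0))<p$. That holds because $\lim_{t\to\infty}A^{\rm fast}_{\mathrm{te},\gamma}(t)=\Phi(g(\theta_0))\le p_{\rm fast}$, since $V_\gamma(\infty)=\gamma h_0^2\int b^{-1}\,\mathrm d\mu_\gamma=h_0^2\gamma/(1-\gamma)$. Choosing first $T$ and then $s_0$ small closes the gap. A diagonal argument over $\eta,s_0$ then finishes both cases.
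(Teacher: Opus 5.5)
Your proposal is correct and follows essentially the same route as the paper's proof: nuisance tracking combined with core convergence on compacts for the norm and accuracy limits, the mismatch bound with $\rho_n$ bounded below for $T^\xi(\|e_n\|)=O_{\mathbb P}(1)$, bracketing of the transversal crossing of $\delta_p$, the global sandwich and tracking bounds for non-attainment, and the same two-window exclusion of early crossings (your computation $V_\gamma(\infty)=(1-\theta_0)^2\gamma/(1-\gamma)$ makes explicit the paper's bare assertion that $F_\gamma(\theta_0)\le p_{\rm fast}<p$). One small slip: when $\beta=0$ the bound $\max(\theta_0,1-\beta)<1$ fails, so the lower bound on the denominator $1-\bar\Theta(s)$ over $[s_0,S]$ should instead come from $\bar\Theta(s)<1$ for every finite $s$ together with continuity on the compact interval, as in the paper.
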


\begin{theorem}[At the interpolation edge]
\label{thm:edge}
Consider the limits \eqref{eq:biglimit} with  $\gamma=1$, and additionally assume the following:
\begin{eqnarray}
\varepsilon_n/\rho_n^2\to0,
    \quad
    \beta_n\sqrt{\rho_n^{-1}-1}\to b\in[0,\infty).
\end{eqnarray}
Then, $\beta_n \to \beta = 0$ and the following hold:
\begin{itemize}
\item The reference core trajectory \eqref{eq:bigtheta} satisfies
$\bar\Theta(s)\to1$ as $s\to\infty$.
\item Every fixed
$\delta\in(\theta_0,1)$ satisfies
\begin{eqnarray}
        T_\delta(\theta_n)
    \simeq_{\mathbb P}\ell_nH_\delta(0).
\end{eqnarray}
\end{itemize}

Furthermore, for every fixed $\xi>0$, it holds that
\begin{align}
        T^\xi(\|e_n\|)
    &\overset{\mathbb P}{\longrightarrow}\infty,\\
    T^\xi(\|e_n\|)
    &=O_{\mathbb P}\!\left(
        \rho_n^{-2}\log\rho_n^{-1})
    \right)
    =o_{\mathbb P}(\ell_n),
\end{align}
and the stationary limits satisfy
\begin{eqnarray}
        \theta_n(\infty)
    \overset{\mathbb P}{\longrightarrow}1,
    \quad
    \|w_n(\infty)\|
    \overset{\mathbb P}{\longrightarrow}b.
\end{eqnarray}

Finally, for every target accuracy level $p\in(p_{\mathrm{fast}}(1),1)$, it holds that
\begin{eqnarray}
    \begin{cases}
    T_p(\mathcal A_{\mathrm{te},n})
    \simeq_{\mathbb P}\ell_nH_{\delta_{n,p}}(\beta_n)
    \simeq_{\mathbb P}\dfrac{\ell_n}{2}\log\dfrac{1}{\rho_n},
    & b<z_p^{-1},\\[3pt]
    \Pr(T_p(\mathcal A_{\mathrm{te},n})=\infty)\to1,
    & b>z_p^{-1}.
\end{cases}
\end{eqnarray}
\end{theorem}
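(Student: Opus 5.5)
The plan is to derive every claim from Proposition~\ref{prop:tracking}, with $\rho_n\to0$ as the new ingredient, since $\gamma=1$. The rate $\beta_n\to0$ is immediate: $\beta_n=b_n/\sqrt{\rho_n^{-1}-1}$ with $b_n\to b<\infty$ and $\rho_n^{-1}\to\infty$. The reference ODE \eqref{eq:bigtheta} with $\beta=0$ is the standard logistic equation, so $\bar\Theta(s)\to1$. The hitting-time equivalence $T_\delta(\theta_n)\simeq\ell_nH_\delta(0)$ then follows from Proposition~\ref{prop:tracking}(B), because $\beta=0<1-\delta$ for every fixed $\delta<1$. The stationary limits come from Proposition~\ref{prop:finite}:
\[
\theta_n(\infty)=\theta_*=\bigl(1-\beta_n/c_n\bigr)_+,\qquad w_n(\infty)=(1-\theta_*)\hat w_n .
\]
By Proposition~\ref{prop:tracking}(A), $c_n\to1$ and $\|\hat w_n\|\sim\sqrt{\rho_n^{-1}-1}$. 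Hence $\theta_*\to1$ and $\|w_n(\infty)\|\sim\beta_n\sqrt{\rho_n^{-1}-1}\to b$.

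For the entry time, the upper bound is read off from Proposition~\ref{prop:tracking}(C). Since $\varepsilon_n\rho_n^{-2}\to0$, it suffices that $C\rho_n^{-1}e^{-c\rho_n^2t}\le\xi/2$, which holds for $t\ge c^{-1}\rho_n^{-2}\log(2C/(\xi\rho_n))$. Comparing with $\ell_n=(2\varepsilon_n\rho_n)^{-1}$ gives
\[
\frac{\rho_n^{-2}\log\rho_n^{-1}}{\ell_n}=2\,\frac{\varepsilon_n}{\rho_n^2}\,\rho_n\log\rho_n^{-1}\to0,
\]
so the bound is $o(\ell_n)$. For divergence, I would write $\dot e_n=-C_ne_n+\dot\theta_n\hat w_n$. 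On a fixed window $[0,T]$ the forcing term is $O_{\mathbb P}(\varepsilon_n\|\hat w_n\|T)$, and I would check that it is negligible against the free part. Since $e_n(0)=w_n(0)-(1-\theta_0)\hat w_n$, the free part satisfies
\[
\|e^{-C_nt}e_n(0)\|\ge(1-\theta_0)\|e^{-C_nT}\hat w_n\|-\sigma .
\]
Writing $\|e^{-C_nT}\hat w_n\|^2$ as a spectral sum, the mass of $\hat w_n=C_n^{-1}a_n$ sits on eigenvalues of order $\rho_n^2$ near the Marchenko--Pastur hard edge, and $e^{-C_nT}$ barely damps those. The divergence of $\|\hat w_n\|^2\sim\rho_n^{-1}$ comes from eigenvalues below any fixed level, so $\|e_n(t)\|\to\infty$ uniformly on $[0,T]$ and $T^\xi(\|e_n\|)>T$ with probability tending to one.

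For the test accuracy I would combine \eqref{eq:test-acc} with the tracking statement in Proposition~\ref{prop:tracking}(C):
\[
\|w_n(\ell_ns)\|=(1-L_n(s))\|\hat w_n\|+o_{\mathbb P}(1)\quad\text{uniformly in } s\ge s_0 .
\]
At the relevant times $\|w_n\|$ is of constant order, near $1/z_p$, so the additive $o(1)$ error is harmless. The event $\mathcal A_{\mathrm{te},n}\ge p$ then becomes
\[
L_n/(1-L_n)\ge z_p\|\hat w_n\|(1+o(1)),\quad\text{i.e.}\quad L_n\ge\delta_{n,p}(1+o(1)),
\]
with $1-\delta_{n,p}\sim\sqrt{\rho_n}/z_p$. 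The logistic limit of $L_n$ is $1-\beta_n/c_n$, whose deficit is $\sim b\sqrt{\rho_n}$. The target is therefore reachable if $b<1/z_p$ and never reached if $b>1/z_p$. In the first case the explicit logistic hitting time gives
\[
H_{\delta_{n,p}}(\beta_n)=\log\frac{1}{1-\beta_n-\delta_{n,p}}+O(1)=\tfrac12\log\rho_n^{-1}\,(1+o(1)),
\]
which is \eqref{eq:edge-Tdelta}. Early crossings are excluded in two steps: on $[0,T]$ by Theorem~\ref{thm:early}(B), using $p>p_{\mathrm{fast}}(1)$, and on $[T,s_0\ell_n]$ because $\theta_n$ is still bounded away from $1$ while $\|w_n\|$ is of order at least $\|e_n\|$ or $\rho_n^{-1/2}$.

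The main obstacle is precision. The gap $1-\beta_n-\delta_{n,p}$ is only of order $\sqrt{\rho_n}$, so every approximation must hold to relative accuracy $o(\sqrt{\rho_n})$, not merely $o(1)$. First, the trajectory error $O(\varepsilon_n/\rho_n)$ of Proposition~\ref{prop:tracking}(A) is $o(\sqrt{\rho_n})$ because $\varepsilon_n\rho_n^{-3/2}\to0$. Second, the random constants $c_n$ and $\|\hat w_n\|^2/(\rho_n^{-1}-1)$ must be controlled at scale $o(\sqrt{\rho_n})$. At $\lambda_n=0$, for instance, $c_n=Y^\top(I-P_Z)Y/\kappa_n\sim\chi^2_{\kappa_n}/\kappa_n$, which fluctuates at scale $\kappa_n^{-1/2}$. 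Rather than demand $\kappa_n\rho_n\to\infty$, I would try to avoid this requirement by defining the hitting event through the exact random quantities $\theta_*$ and $\|\hat w_n\|$. The comparison should then involve only ratios such as $(1-\delta_{n,p})/(1-\theta_*)$, which converge to $1/(z_pb)$ without needing absolute precision in $c_n$. The logarithmic equivalence is insensitive to $O(1)$ errors in that ratio, provided the ratio stays bounded away from $1$.
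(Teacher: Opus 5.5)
The genuine gap is in excluding early test-accuracy crossings on the intermediate window $[T,t_n^\circ]$ with $t_n^\circ=\rho_n^{-2}\log^2\rho_n^{-1}$. This is the stretch after the fixed horizon covered by Theorem~\ref{thm:early}, but before the mismatch bound of Proposition~\ref{prop:tracking}(C) has made $\|e_n\|$ small. At the edge this window has diverging length, and your justification that $\|w_n\|$ is ``of order at least $\|e_n\|$ or $\rho_n^{-1/2}$'' is false on it.

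Up to $O(1)$ terms and negligible forcing, $e_n(t)\approx-(1-\theta_0)e^{-C_nt}\hat w_n$, which is still of size $\rho_n^{-1/2}$ for $t\ll\rho_n^{-2}$ because the hard-edge mass is undamped. By contrast, $w_n(t)\approx e^{-C_nt}w_n(0)+(1-\theta_0)(I-e^{-C_nt})\hat w_n$ has norm of order one at fixed $t$ and grows only slowly, roughly like $t^{1/4}$ for $1\ll t\ll\rho_n^{-2}$. The two vectors live on complementary parts of the spectrum, so $\|w_n\|\ge(1-\theta_n)\|\hat w_n\|-\|e_n\|$ gives nothing there.

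Without a uniform lower bound you cannot rule out $\|w_n(t_n)\|<\theta_n(t_n)/z_p$ at some diverging $t_n\le t_n^\circ=o(\ell_n)$. Such a crossing would invalidate both $T_p(\mathcal A_{\mathrm{te},n})\simeq\frac{\ell_n}{2}\log\rho_n^{-1}$ and non-attainment for $b>z_p^{-1}$.

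The paper closes this with a monotonicity argument. On the window, $1-\theta_n\ge(1-\theta_0)/2$. Hence, in an eigenbasis of $C_n$, each coordinate of the forced response has modulus at least $\frac{1-\theta_0}{2}|a_{n,j}|\,(1-e^{-(b_{n,j}+\lambda_n)T})/(b_{n,j}+\lambda_n)$ for every $t\ge T$. This gives $\|w_n(t)\|\ge\frac{1-\theta_0}{2}\sqrt{J_n(T)}-\|w_n(0)\|$, where
$J_n(T)=\sum_j a_{n,j}^2\bigl[(1-e^{-(b_{n,j}+\lambda_n)T})/(b_{n,j}+\lambda_n)\bigr]^2$ converges to $J(T)=\int(1-e^{-xT})^2x^{-1}\mu_1(\mathrm dx)$. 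Because the density of $\mu_1$ behaves like $1/(\pi\sqrt x)$ near zero, $J(T)\to\infty$. One fixed large $T$ therefore gives $\|w_n\|>2/z_p\ge\theta_n/z_p$ (using $\theta_n\le2$) on all of $[T,t_n^\circ]$. The hard-edge spectral picture you already use for the divergence of $T^\xi(\|e_n\|)$ is exactly the ingredient needed. You must apply it as a time-monotone lower bound on the fitted component $(I-e^{-C_nt})\hat w_n$.

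Everything else matches the paper's proof:
\begin{itemize}
\item $\beta_n\to0$;
\item the core hitting times from Proposition~\ref{prop:tracking}(B);
\item the stationary formula with $\beta_n\|\hat w_n\|\to b$;
\item the entry-time upper bound from the mismatch estimate;
\item bracketing the test crossing by tracking plus the explicit logistic deficit.
\end{itemize}

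Two side remarks. First, your spectral proof that $T^\xi(\|e_n\|)\to\infty$ is valid but heavier than needed. Since $\|w_n(T)\|\le\sigma+T\|a_n\|=O_{\mathbb P}(1)$ while $(1-\theta_n(T))\|\hat w_n\|\to\infty$, you get $\|e_n(T)\|\to\infty$ directly.

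Second, your precision concern is misdiagnosed. Only $|\theta_n-L_n|=o(\sqrt{\rho_n})$ is genuinely needed, which you check. By contrast, $c_n$ and $\|\hat w_n\|$ need only relative accuracy $o(1)$, because they enter only through the deficit $\beta_n/c_n$, the rate $c_n-\beta_n$, and the product $(1-L_n(s))\|\hat w_n\|$. The paper evaluates this product at $s_n^\pm=(\tfrac12\pm a)\log\rho_n^{-1}$, where it tends to $\infty$ and to $b$ respectively, and then lets $a\downarrow0$. This uses only $c_n\to1$, $\sqrt{\rho_n}\|\hat w_n\|\to1$ and $\beta_n\sqrt{\rho_n^{-1}-1}\to b$. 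No $\kappa_n^{-1/2}$ fluctuation issue arises, which is consistent with your ratio-based fix.
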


\section{Proofs of the Main Results}
\label{app:main-proofs}

Throughout, convergence and stochastic order estimates are with
respect to the training sample and initialization. We use the
global existence and finite-sample convergence result established
earlier, together with the fast-phase accuracy convergence.
The random-matrix inputs used below are stated explicitly.

\subsection{Proof of Proposition~\ref{prop:tracking}}
All asymptotics are w.r.t the limits \eqref{eq:biglimit}. Write
\[
    C_n=B_n+\lambda_nI,\quad
    q_n=1-a_n^\top C_n^{-1}a_n,\quad
    g_n=\|C_n^{-1}a_n\|^2,
\]
so that $\hat w_n=C_n^{-1}a_n$ and $c_n=q_n/\rho_n$.

\paragraph{Sample estimates.}
We first record the Gaussian estimates needed in the proof.
At zero regularization, let
\[
    q_{0,n}=1-a_n^\top B_n^{-1}a_n,
    \quad
    g_{0,n}=\|B_n^{-1}a_n\|^2.
\]
The random column space of $Z_n$ is uniformly distributed and
independent of $Y_n$. Consequently,
\begin{align*}
        q_{0,n}\sim
    \operatorname{Beta}\!\left(\kappa_n/2,d_n/2\right),
    \quad
    \mathbb Eq_{0,n}=\rho_n,\quad
    \operatorname{Var}\!\left(q_{0,n}/\rho_n\right)
   =\frac{2d_n}{(n+2)\kappa_n}\longrightarrow0.
\end{align*}
Thus $q_{0,n}/\rho_n\to1$ in probability.

For completeness, the corresponding inverse-covariance estimate is
\[
    \frac{g_{0,n}}{d_n/\kappa_n}
    \overset{\mathbb P}{\longrightarrow}1.
\]
It follows from the Gaussian polar decomposition
$Z_n=U_nW_n^{1/2}$, where $W_n=Z_n^\top Z_n$ and $U_n$ is independent
of $W_n$. Indeed, $g_{0,n}=Y_n^\top U_nW_n^{-1}U_n^\top Y_n$,
and spherical second and fourth moments give
\[
\begin{aligned}
    \mathbb E[g_{0,n}\mid W_n]
        &=\operatorname{tr}(W_n^{-1}),\quad 
    \operatorname{Var}(g_{0,n}\mid W_n)
        =\frac{2}{n+2}\left[
            n\operatorname{tr}(W_n^{-2})
            -\operatorname{tr}(W_n^{-1})^2
        \right].
\end{aligned}
\]
The inverse-Wishart moment identities
\[
\begin{aligned}
    \mathbb E\operatorname{tr}(W_n^{-1})
        &=\frac{d_n}{\kappa_n-1},\quad
    \mathbb E\operatorname{tr}(W_n^{-2})
        =\frac{d_n(n-1)}{\kappa_n(\kappa_n-1)(\kappa_n-3)},\quad
    \operatorname{Var}\!\left(\operatorname{tr}(W_n^{-1})\right)
        =\frac{2d_n(n-1)}
        {\kappa_n(\kappa_n-1)^2(\kappa_n-3)}
\end{aligned}
\]
then imply the assertion by Chebyshev's inequality, since
$\kappa_n\to\infty$.

We also use the standard extreme-singular-value estimates for
rectangular Gaussian matrices: there are deterministic constants
$c,C>0$ such that
\begin{equation}
    \Pr\!\left\{
        \lambda_{\min}(B_n)\ge c\rho_n^2,\quad
        \|B_n\|\le C
    \right\}\longrightarrow1.
    \label{eq:proof-spectral-bounds}
\end{equation}
In particular,
\[
    \lambda_n/\lambda_{\min}(B_n)
    =O_{\mathbb P}(\varepsilon_n/\rho_n)\longrightarrow0.
\]
Comparison in an eigenbasis of $B_n$ therefore gives
$g_n/g_{0,n}\to1$. Moreover,
\[
    0\le q_n-q_{0,n}\le\lambda_n\,g_{0,n},
\]
and hence
\[
    0\le\frac{q_n-q_{0,n}}{\rho_n}
    \le\beta_n\,\varepsilon_n\,g_{0,n}
    =O_{\mathbb P}(\varepsilon_n/\rho_n)\longrightarrow0.
\]
This proves
\[
    c_n\to1,\quad
    \frac{g_n}{\rho_n^{-1}-1}\to1.
\]

\paragraph{A scalar identity for the core.}
The nuisance equation implies
\[
    \hat w_n^\top\dot w_n
    =(1-\theta_n)a_n^\top\hat w_n-a_n^\top w_n.
\]
Consequently,
\[
    1-\theta_n-a_n^\top w_n
    =q_n(1-\theta_n)+\hat w_n^\top\dot w_n.
\]
Set $\Theta_n(s)=\theta_n(\ell_ns)$. Integrating the equation
for $\log\theta_n$ gives the exact identity
\begin{equation}
    \log\frac{\Theta_n(s)}{\theta_0}
    =\int_0^s[c_n(1-\Theta_n(r))-\beta_n]\,\mathrm dr
      +E_n(s),
    \label{eq:proof-log-identity}
\end{equation}
where
\[
    E_n(s)=2\varepsilon_n\hat w_n^\top
    [w_n(\ell_ns)-w_n(0)].
\]

We bound this error before a possible first crossing of
$\theta_n=2$. On that interval, $|1-\theta_n|\le1$.
Variation of constants in an eigenbasis of $C_n$ yields
\[
    \|w_n(t)\|\le\|w_n(0)\|+\|\hat w_n\|,
\]
because each coordinate of the forced response has magnitude
at most the corresponding coordinate magnitude of $\hat w_n$.
Thus
\[
    |E_n(s)|\le D_n:=
    2\varepsilon_n\|\hat w_n\|
    (2\|w_n(0)\|+\|\hat w_n\|)
    =O_{\mathbb P}(\varepsilon_n/\rho_n).
\]

Let $X_n=\Theta_ne^{-E_n}$. Differentiating
\eqref{eq:proof-log-identity} gives
\[
    X_n'=X_n(c_n-\beta_n-c_ne^{E_n}X_n).
\]
Scalar comparison with the logistic equation defining $L_n$
shows that, while $|E_n|\le D_n$,
\[
    e^{-D_n}L_n(s)\le X_n(s)\le e^{D_n}L_n(s),
\]
and therefore
\begin{equation}
    e^{-2D_n}L_n(s)\le\Theta_n(s)\le e^{2D_n}L_n(s).
    \label{eq:proof-core-sandwich}
\end{equation}
Here $0<L_n(s)\le1$. On the event $e^{2D_n}<2$, the upper
bound prevents $\Theta_n$ from reaching $2$. The preceding
estimates thus hold for all $s\ge0$ on an event whose
probability tends to one. It follows that
\[
    \sup_{s\ge0}|\Theta_n(s)-L_n(s)|
    \le e^{2D_n}-1
    =O_{\mathbb P}(\varepsilon_n/\rho_n).
\]
Since $c_n\to1$ and $\beta_n\to\beta$, continuous dependence
of the scalar logistic equation gives the stated convergence
to $\bar\Theta$ on bounded slow-time intervals.

\paragraph{Core hitting times.}
If $\beta<1-\delta$, the limiting trajectory crosses $\delta$
strictly increasingly at $H_\delta(\beta)$. For every sufficiently
small $h>0$,
\[
    \max_{0\le s\le H_\delta(\beta)-h}\bar\Theta(s)<\delta,
    \quad
    \bar\Theta(H_\delta(\beta)+h)>\delta.
\]
Uniform convergence on this interval implies, with probability
tending to one,
\[
    H_\delta(\beta)-h
    <T_\delta(\theta_n)/\ell_n
    <H_\delta(\beta)+h.
\]
Letting $h\downarrow0$ proves the equivalent.

If $\beta>1-\delta$, then the quantity
\[
    \sup_{s\ge0}L_n(s)
    =\max\!\left\{\theta_0,
        \left(1-\beta_n/c_n\right)_+\right\}
\]
converges to a value strictly below $\delta$.
The global bound \eqref{eq:proof-core-sandwich} excludes any
crossing with probability tending to one. At equality,
$\bar\Theta(s)<\delta$ on every bounded interval, which proves
$T_\delta(\theta_n)/\ell_n\to\infty$.

\paragraph{Nuisance tracking.}
Direct differentiation of
$e_n=w_n-(1-\theta_n)\hat w_n$ gives
\[
\begin{aligned}
    \dot e_n
    ={}&-(C_n+2\varepsilon_n\theta_n\hat w_na_n^\top)e_n+2\varepsilon_n\theta_n\hat w_n
       [q_n(1-\theta_n)-\lambda_n/\varepsilon_n].
\end{aligned}
\]
Put $z_n=C_n^{1/2}e_n$ and $u_n=C_n^{-1/2}a_n$. Then
\[
\begin{aligned}
    \dot z_n
    ={}&-(C_n+2\varepsilon_n\theta_nu_nu_n^\top)z_n +2\varepsilon_n\theta_nu_n
       [q_n(1-\theta_n)-\lambda_n/\varepsilon_n].
\end{aligned}
\]
The matrix in the first line is symmetric and bounded below
by $C_n$. Also,
\[
    \|u_n\|^2=a_n^\top C_n^{-1}a_n=1-q_n\le1.
\]
On events of probability tending to one, the preceding
estimates give $\theta_n\le2$, $q_n\le C\rho_n$,
$\lambda_n/\varepsilon_n\le C\rho_n$, and $\|z_n(0)\|\le C$.
Writing $m_n=\lambda_{\min}(C_n)$, the norm inequality yields
\[
    \|z_n(t)\|
    \le Ce^{-m_nt}
       +C\varepsilon_n\rho_n/m_n.
\]
Using $m_n\ge c\rho_n^2$ and
$\|e_n\|\le m_n^{-1/2}\|z_n\|$ proves
\[
    \|e_n(t)\|
    \le C\rho_n^{-1}e^{-c\rho_n^2t}
       +C\varepsilon_n\rho_n^{-2}
\]
uniformly for $t\ge0$.

Suppose now that $\varepsilon_n/\rho_n^2\to0$. For $s\ge s_0>0$,
the first term is bounded by
\[
    C\rho_n^{-1}
    \exp\!\left(-\frac{cs_0\rho_n}{2\varepsilon_n}\right)
    \longrightarrow0,
\]
and the second term also vanishes. Finally,
\[
\begin{aligned}
    &\|w_n(\ell_ns)-(1-L_n(s))\hat w_n\| \le
    \|e_n(\ell_ns)\|
    +|\Theta_n(s)-L_n(s)|\,\|\hat w_n\|.
\end{aligned}
\]
The second term is uniformly
$O_{\mathbb P}(\varepsilon_n/\rho_n^{3/2})=o_{\mathbb P}(1)$.
This completes the proof. \qed

\subsection{Proof of Theorem~\ref{thm:underparam}}

Since $\rho_n\to1-\gamma>0$, the additional tracking condition
in Proposition~\ref{prop:tracking} holds automatically.
Moreover,
\[
    \|\hat w_n\|^2\to G_\gamma:=\frac{\gamma}{1-\gamma}.
\]
Combining nuisance tracking with core convergence gives, uniformly
on every $[s_0,S]\subseteq(0,\infty)$,
\[
    \|w_n(\ell_ns)\|^2
    \longrightarrow G_\gamma(1-\bar\Theta(s))^2.
\]
The limiting core stays strictly below one on every such
compact interval. The limiting nuisance norm is therefore
bounded away from zero there. Applying the exact test accuracy
formula proves the asserted uniform accuracy convergence.

The mismatch bound reduces to
\[
    \|e_n(t)\|\le Ce^{-ct}+C\varepsilon_n
\]
on events of probability tending to one. Given $\xi>0$,
the second term is eventually at most $\xi/2$, and the
first is at most $\xi/2$ after a deterministic time.
Thus $T^\xi(\|e_n\|)=O_{\mathbb P}(1)$.

\paragraph{Excluding crossings before the slow phase.}
Define
\[
    F_\gamma(x)=
    \Phi\!\left(\frac{x}{(1-x)\sqrt{G_\gamma}}\right),
    \quad 0\le x<1.
\]
The fast test curve tends to $F_\gamma(\theta_0)$.
Since $p>p_{\mathrm{fast}}(\gamma)$,
\[
    F_\gamma(\theta_0)<p,
    \quad \theta_0<\delta_p.
\]

We claim that, for sufficiently small fixed $s_0>0$,
there is no crossing before $\ell_ns_0$ with probability
tending to one. On every fixed interval $[0,T]$, this
follows from uniform fast-phase convergence and the strict
gap $p-p_{\mathrm{fast}}(\gamma)>0$.

For the remaining interval $[T,\ell_ns_0]$, core convergence
and continuity at $s=0$ ensure that $\theta_n(t)$ stays
arbitrarily close to $\theta_0$ when $s_0$ is small.
Meanwhile,
\[
    \|w_n(t)-(1-\theta_n(t))\hat w_n\|
    \le Ce^{-cT}+C\varepsilon_n.
\]
First choose $s_0$ small, then $T$ large. The exact accuracy
formula and $F_\gamma(\theta_0)<p$ then keep the test
accuracy strictly below $p$ throughout this interval,
with probability tending to one.

\paragraph{Attainable targets.}
Suppose $\beta<1-\delta_p$. The limiting accuracy
$F_\gamma(\bar\Theta(s))$ crosses $p$ strictly increasingly
at $s=H_{\delta_p}(\beta)$. Uniform convergence on compact
positive slow-time intervals brackets this crossing,
exactly as in the core hitting-time argument. The preceding
early-time exclusion ensures that it is the first crossing.
Hence
\[
    T_p(\mathcal A_{\mathrm{te},n})/\ell_n
    \overset{\mathbb P}{\longrightarrow}
    H_{\delta_p}(\beta).
\]

\paragraph{Unattainable targets.}
Suppose $\beta>1-\delta_p$. The global core bound gives
\[
    \sup_{s\ge0}\Theta_n(s)
    \le
    \max\{\theta_0,(1-\beta)_+\}+o_{\mathbb P}(1),
\]
whose right-hand side is strictly below $\delta_p$.
For every fixed $s_0>0$, nuisance tracking holds uniformly
for all $s\ge s_0$. The nuisance norm is then bounded
away from zero, and the exact accuracy formula gives
\[
    \sup_{s\ge s_0}\mathcal A_{\mathrm{te},n}(\ell_ns)
    \le
    F_\gamma\!\left(\max\{\theta_0,(1-\beta)_+\}\right)
    +o_{\mathbb P}(1)<p
\]
with probability tending to one. Together with the
early-time exclusion, this proves non-attainment. \qed

\subsection{Proof of Theorem~\ref{thm:edge}}

Here $\rho_n\to0$. Since
$\beta_n\sqrt{\rho_n^{-1}-1}\to b<\infty$, we have
$\beta_n\to0$. The core conclusions therefore follow
directly from Proposition~\ref{prop:tracking} with $\beta=0$.

\paragraph{Nuisance entry and stationary limits.}
The mismatch bound shows that, for every fixed $\xi>0$,
permanent entry occurs by a time of order
$\rho_n^{-2}\log\rho_n^{-1}$ with probability tending to one.
Furthermore,
\[
    \frac{\rho_n^{-2}\log\rho_n^{-1}}{\ell_n}
    =\frac{2\varepsilon_n}{\rho_n}\log\frac{1}{\rho_n}
    \longrightarrow0.
\]

To prove divergence of the entry time, fix $T<\infty$.
Fast-phase convergence gives $\theta_n(T)\to\theta_0$,
while $\|w_n(T)\|=O_{\mathbb P}(1)$ and
$\sqrt{\rho_n}\|\hat w_n\|\to1$. The triangle inequality
therefore gives
\[
    \sqrt{\rho_n}\|e_n(T)\|
    \overset{\mathbb P}{\longrightarrow}1-\theta_0>0.
\]
Thus $\Pr\{\|e_n(T)\|>\xi\}\to1$, which implies
$\Pr\{T^\xi(\|e_n\|)>T\}\to1$.

The finite-sample stationary formula is
\[
    \theta_n(\infty)
    =\left(1-\beta_n/c_n\right)_+,
    \quad
    w_n(\infty)=(1-\theta_n(\infty))\hat w_n.
\]
Since $c_n\to1$, $\beta_n\to0$, and
$\beta_n\|\hat w_n\|\to b$, the stated stationary limits follow.

\paragraph{A time after which absolute tracking holds.}
Set
\[
    t_n^\circ=\rho_n^{-2}\log^2 \rho_n^{-1},
    \quad s_n^\circ=t_n^\circ/\ell_n.
\]
The assumptions imply $s_n^\circ\to0$. The mismatch estimate
and the global core approximation give
\begin{equation}
    \sup_{s\ge s_n^\circ}
    \|w_n(\ell_ns)-(1-L_n(s))\hat w_n\|
    \overset{\mathbb P}{\longrightarrow}0.
    \label{eq:proof-edge-tracking}
\end{equation}
Indeed, the decaying term is at most
$C\rho_n^{-1}\exp[-c\log^2 \rho_n^{-1}]$, and both remaining
errors vanish under $\varepsilon_n/\rho_n^2\to0$.
The same core approximation shows that
\[
    \sup_{0\le t\le t_n^\circ}
    |\theta_n(t)-\theta_0|\to0.
\]

\paragraph{Excluding early test crossings.}
For every fixed $T$, fast-phase convergence excludes a crossing
on $[0,T]$ when $p>p_{\mathrm{fast}}(1)$. We also need to
exclude the interval $[T,t_n^\circ]$.

On an event whose probability tends to one,
$1-\theta_n(t)\ge(1-\theta_0)/2$ for $t\le t_n^\circ$.
In an eigenbasis of $B_n$, variation of constants therefore
gives, for $T\le t\le t_n^\circ$,
\[
    \|w_n(t)\|
    \ge
    \frac{1-\theta_0}{2}
    \left[
        \sum_j a_{n,j}^2
        \left(
            \frac{1-e^{-(b_{n,j}+\lambda_n)T}}
            {b_{n,j}+\lambda_n}
        \right)^2
    \right]^{1/2}
    -\|w_n(0)\|.
\]
For each fixed $T$, Gaussian quadratic-form concentration
and Mar{c}enko--Pastur convergence imply that the sum converges
in probability to
\[
    J(T)=\int\frac{(1-e^{-xT})^2}{x}\,\mu_1(\mathrm dx).
\]
This uses a bounded continuous spectral function at fixed $T$.
As $T\to\infty$, $J(T)\to\infty$, because the density of
$\mu_1$ is asymptotic to $1/(\pi\sqrt{x})$ at zero.
Choose a fixed $T$ large enough that the preceding lower
bound exceeds $2/z_p$ with probability tending to one.
Since $\theta_n(t)\le2$ on the same event, the test accuracy
then stays below $p$ on $[T,t_n^\circ]$.

\paragraph{Logistic behavior on the longer clock.}
With probability tending to one, put
\[
    q_n=c_n-\beta_n>0,\quad
    \kappa_n=1-\beta_n/c_n>\theta_0,\quad
    A_n=\kappa_n/\theta_0-1>0.
\]
The explicit comparison trajectory is
\[
    L_n(s)=\frac{\kappa_n}{1+A_ne^{-q_ns}},
\]
and so
\begin{equation}
    1-L_n(s)
    =\frac{\beta_n}{c_n}
    +\frac{\kappa_nA_ne^{-q_ns}}{1+A_ne^{-q_ns}}.
    \label{eq:proof-edge-deficit}
\end{equation}
Here $q_n\to1$, $\kappa_n\to1$, and
$A_n\to(1-\theta_0)/\theta_0>0$.

Now, fix $a\in(0,1/2)$ and define
\[
    s_n^-=(1/2-a)\log\rho_n^{-1},
    \quad
    s_n^+=(1/2+a)\log\rho_n^{-1}.
\]
Equation~\eqref{eq:proof-edge-deficit} gives
\[
    (1-L_n(s_n^-))\|\hat w_n\|\to\infty,
    \quad
    (1-L_n(s_n^+))\|\hat w_n\|\to b
\]
in probability. For the first assertion, the decaying term
has size $\rho_n^{1/2-a+o_{\mathbb P}(1)}$ before multiplication
by $\|\hat w_n\|\simeq_{\mathbb P}\rho_n^{-1/2}$.
For the second, its size is
$\rho_n^{1/2+a+o_{\mathbb P}(1)}$, while the stationary term
converges to $b$ after that multiplication.

\paragraph{Attainable targets.}
Suppose $b<z_p^{-1}$. Since $L_n$ is increasing,
the first preceding limit and \eqref{eq:proof-edge-tracking}
imply that the nuisance norm diverges uniformly over
$s\in[s_n^\circ,s_n^-]$. Thus test accuracy stays below
$p$ there with probability tending to one.
Together with early-time exclusion, this gives
\[
    T_p(\mathcal A_{\mathrm{te},n})>\ell_ns_n^-
\]
with probability tending to one.

At $s_n^+$, core convergence through the global approximation
gives $\theta_n(\ell_ns_n^+)\to1$, and
\eqref{eq:proof-edge-tracking} gives
$\|w_n(\ell_ns_n^+)\|\to b$.
The test accuracy therefore tends to $\Phi(1/b)>p$,
with the value interpreted as one when $b=0$.
Consequently,
\[
    T_p(\mathcal A_{\mathrm{te},n})\le\ell_ns_n^+
\]
with probability tending to one. Letting $a\downarrow0$ proves
\[
    T_p(\mathcal A_{\mathrm{te},n})
    \simeq_{\mathbb P}\frac{\ell_n}{2}\log\frac{1}{\rho_n}.
\]
Finally, direct substitution in \eqref{eq:Tdelta} gives
\[
    H_{\delta_{n,p}}(\beta_n)
    =\frac12\log\frac{1}{\rho_n}+O(1),
\]
which proves the other equivalent in the statement.

\paragraph{Unattainable targets.}
Suppose $b>z_p^{-1}$. From \eqref{eq:proof-edge-deficit},
\[
    (1-L_n(s))\|\hat w_n\|
    \ge\frac{\beta_n}{c_n}\|\hat w_n\|
    =b+o_{\mathbb P}(1)
\]
uniformly for all $s\ge0$. Combining this with
\eqref{eq:proof-edge-tracking} and
$\sup_s\Theta_n(s)\le1+o_{\mathbb P}(1)$ gives
\[
    \sup_{s\ge s_n^\circ}
    \mathcal A_{\mathrm{te},n}(\ell_ns)
    \le\Phi(1/b)+o_{\mathbb P}(1)<p
\]
with probability tending to one. Early-time exclusion
handles the remaining interval, proving non-attainment. This completes the proof of the theorem. \qed
\section{Experimental Validation}
\label{sec:experiments}

In this section, we provide extended experimental results under gradient flow (GF) analyzed in the main text.
Furthermore, we show that our theoretical predictions directly extend to gradient descent via rescaled time $\tau = k \cdot \eta$, where $\eta$ denotes the step size and $k$ the iteration index.

\subsection{Setup}
\label{sec:setup}
All experiments are run on a synthetic dataset generated according to the distribution defined in \eqref{eq:distro}.
Continuous-time trajectories are obtained by numerically integrating the exact ODEs \eqref{eq:exact}.
For discrete-time gradient descent, student models (\ref{eq:toy-model}) are trained using vanilla gradient descent with a learning rate of $\eta=0.5$.
Core feature weights $u$ and $v$ are initialized following Assumption~\eqref{eq:assumuv}.
Nuisance feature weights $w_0$ are initialized using PyTorch's default uniform initialization and subsequently normalized to unit norm ($\|w\|=1$).

\subsection{Results}
We empirically validate our main Theorem \ref{thm:big} across both gradient flow and gradient descent.

\paragraph{Grokking trajectories ($\lambda\approx1)$}
Figure~\ref{fig:hard_interp_app} presents experimental results in the hard interpolation regime. Note that the signal strength remains small across all experiments to satisfy $\eps/\rho_n^2\to0$, as required at the interpolation edge. As predicted by our theory, regularization in this regime prevents from reaching the target test accuracy.

\paragraph{Gradient flow and gradient descent}
Figure~\ref{fig:underparameterized_grokking_ode_vs_gd} compares the temporal evolution of accuracy and the core feature $\theta$ under gradient descent (top) and gradient flow (bottom).
Both experiments use $n = 400$ samples, $d = 360$ nuisance features ($\gamma = d/n = 0.9$), and initial core feature scale $\theta_0 = 10^{-1}$.
The close agreement confirms that discrete gradient descent trajectories converge to the corresponding gradient flow limit under the time rescaling.

\paragraph{Phase portrait}
Figure \ref{fig:experiment_underparams_phase} presents the phase portrait of $\|w\|$ against $\theta$ in the under-parametrized regime. Notice the fast relaxation of the fast variable $W$ and the slow evolution along  the neighborhood of the slow manifold, following Fenichel theory. Notably, when grokking fails to occur, the phase trajectory shifts toward the left.

\paragraph{Logarithmic scaling of grokking time}
Figure~\ref{fig:plateau_scaling_gd} illustrates the normalized grokking time $T_p / \ell_n$ as a function of log-inverse core feature initialization $\log(1/\theta_0)$ under gradient descent ($n = 400, d = 360, \varepsilon = 10^{-2}, \lambda = 0$). The empirically measured hitting times $T_p/\ell_n$ scale linearly with $\log(1/\theta_0)$ and closely follow the theoretical prediction $H_{\delta}(\beta)$ \eqref{eq:H}. 
This demonstrates that the logarithmic plateau time derived for gradient flow holds identically for discrete gradient descent. Figure~\ref{fig:edgy} and Figure~\ref{fig:plateau_n=10000} present further experiments with larger sample size $n$ and feature dimension $d$.

\begin{figure}[H]
    \centering
     \begin{subfigure}{\textwidth}
        \centering
        \includegraphics[width=0.9\linewidth]{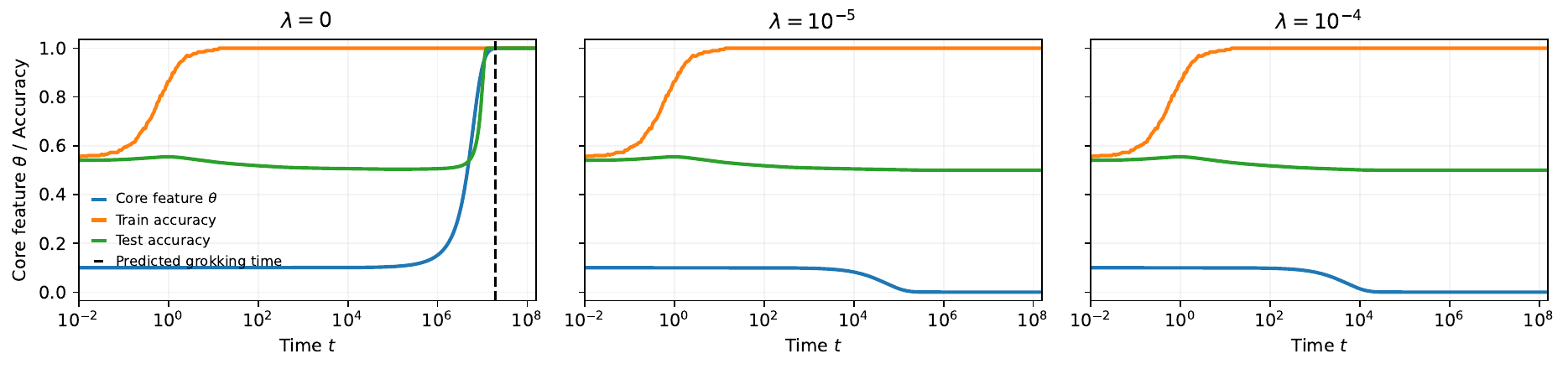}
        \caption{\textbf{Vanishing SNR $\eps=10^{-5}$.}}
      \end{subfigure}
      \medskip
     \begin{subfigure}{\textwidth}
        \centering
        \includegraphics[width=0.9\linewidth]{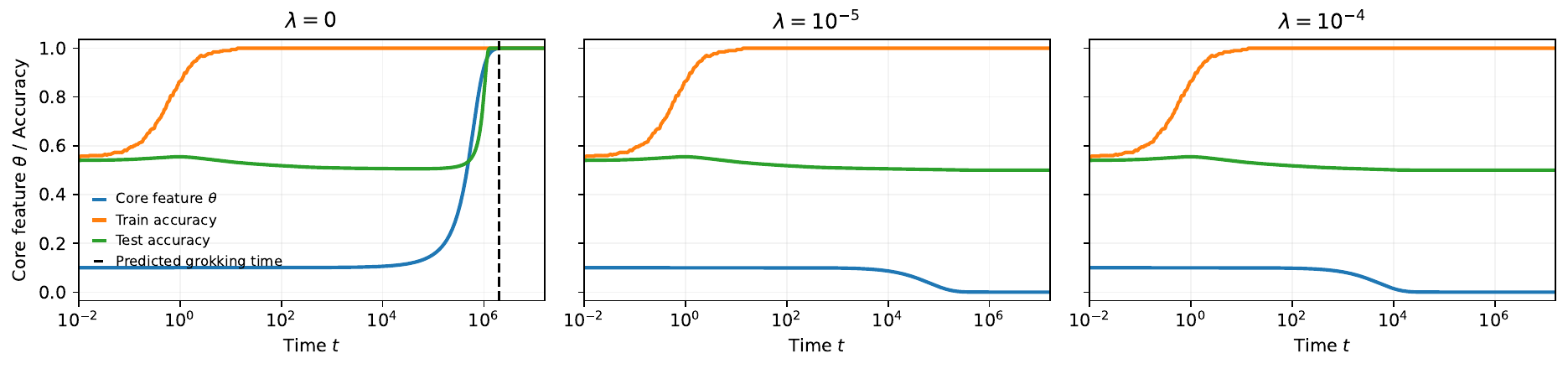}
        \caption{\textbf{Very weak SNR $\eps=10^{-4}$.}}
      \end{subfigure}
      \medskip
     \begin{subfigure}{\textwidth}
        \centering
        \includegraphics[width=0.9\linewidth]{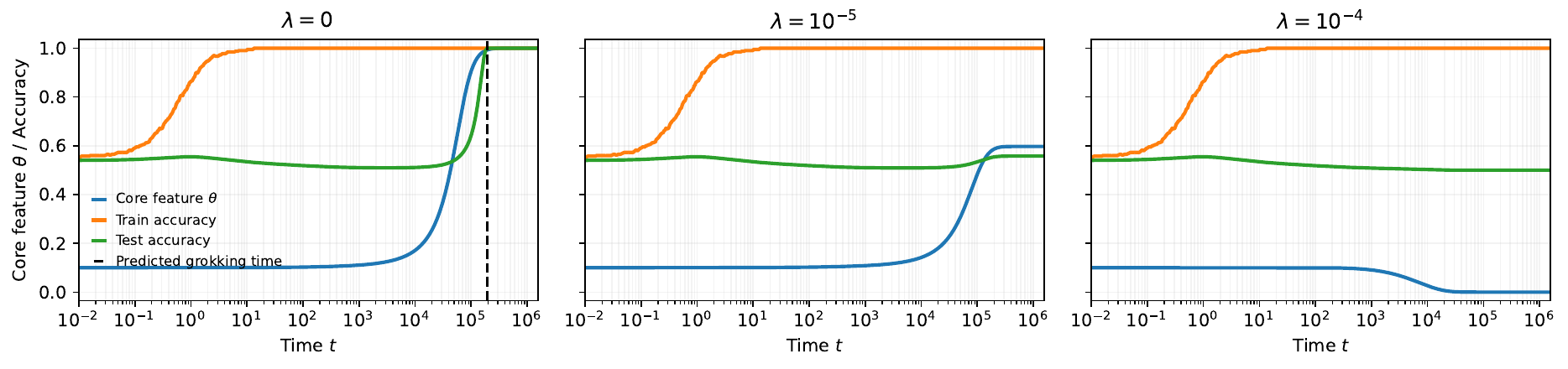}
        \caption{\textbf{Weak SNR $\eps=10^{-3}$.}}
      \end{subfigure}
    \caption{\textbf{Grokking at hard interpolation $\gamma=1$}. Here we use $n=400$ samples and $d=395$ nuisance features ($d/n\approx1$). This is an extension of experiments in Figure \ref{fig:experiment_hard_interp}, using regularized models $\lambda\in\{10^{-5}, 10^{-4}\}$.
    Here, critical threshold is $1/z_p=0.60$ and $b>1/z_p$ in last two columns.
    For fixed $n$ and $d$, $\beta_n/\sqrt{\rho_n^{-1}-1}\to b$ increases as the regularization increases, making grokking harder to happen.}
    \label{fig:hard_interp_app}
\end{figure}

\begin{figure}[H]
    \centering
    \begin{subfigure}{\textwidth}
        \centering
        \includegraphics[width=0.9\linewidth]{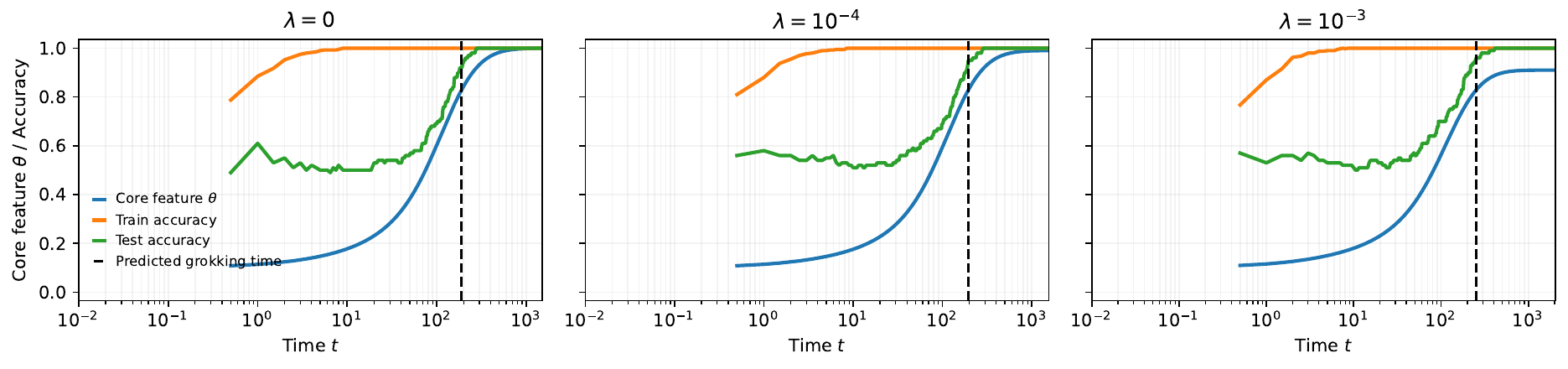}
        \caption{Gradient descent (discrete dynamics)}
      \end{subfigure}
      \medskip
    \begin{subfigure}{\textwidth}
        \centering      
        \includegraphics[width=0.9\linewidth]{figures/ode_n=400_d=360_theta0=5.00e-02_p=0.95_seed=15_acc_theta_epsilon=1.00e-01.pdf}
        \caption{Gradient flow (continuous dynamics)}
    \end{subfigure}
    \caption{\textbf{Grokking in the under-parameterized regime ($\gamma < 1$) for gradient descent and continuous gradient flow.} 
    Both experiments use $n = 400$ samples, $d = 360$ nuisance features ($\gamma = d/n = 0.9$), and initial core feature scale $\theta_0 = 10^{-1}$. Signal strength is $\varepsilon=10^{-1}$.
    (a) Gradient Descent with rescaled time. 
    (b) Continuous Gradient Flow ODE. 
    The close match confirms that the theoretical continuous-time analysis accurately captures discrete-time dynamics.}
    \label{fig:underparameterized_grokking_ode_vs_gd}
\end{figure}

\begin{figure}[H]
    \centering
    \includegraphics[width=.9\linewidth]{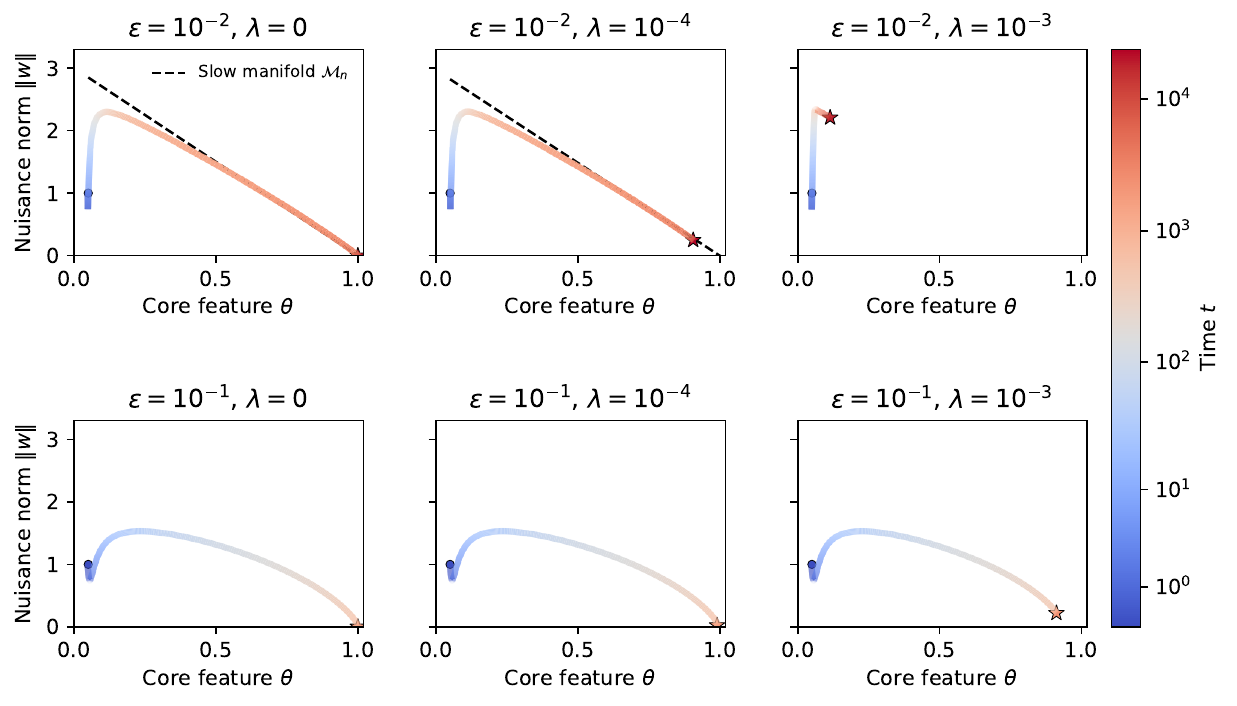}
    \caption{\textbf{Phase portrait $\|w\|$ against $\theta$ in under-parametrized model ($\gamma=0.9$)}.
    Experiments use $n=400$ samples, and $d=360$ nuisance features.
    Results illustrate phase portraits from experiments in Figure \ref{fig:experiment_underparams}.
    For small SNR $\eps=10^{-2}$ (top row), dynamics exhibit a fast inital relaxation in time scale of $O(1)$ (visible as a near-vertical trajectory at the beginning of training), after which nuisance feature weights $w$ are trapped in slow manifold $\mathcal M_0$, directly validating Theorem \ref{thm:big} via Fenichel theory. Notice that moderate SRN $\eps=10^{-1}$ (bottom row) accelerates grokking, yielding a reduced hitting time, in exact agreement with our analytical $T_p \simeq H_p(\beta) \ell_n$.}
    \label{fig:experiment_underparams_phase}
\end{figure}

\begin{figure}[H]
    \centering
    \includegraphics[width=.48\linewidth]{figures/normalized_hitting_time_n=400_d=360_epsilon=0.001_lambda=0.0_p=0.95.pdf}
    \includegraphics[width=.48\linewidth]{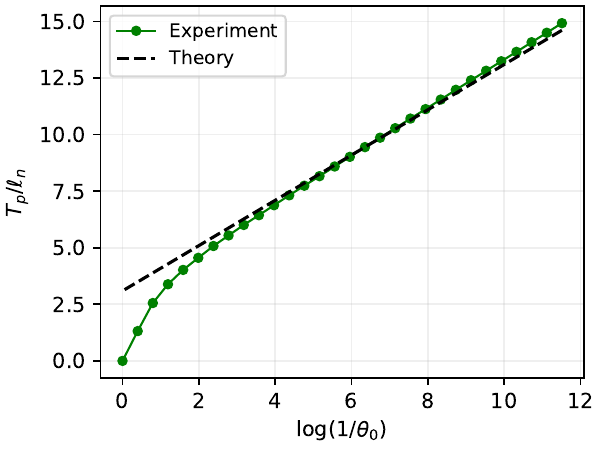}
    \vspace{-.25cm}
    \caption{\textbf{Logarithmic scaling of grokking time w.r.t core feature initialization $\theta_0$ under Gradient Descent.}
    Comparing gradient flow (\textbf{left}) and gradient-descent (\textbf{right}). The settings are identity to Figure \ref{fig:plateau_scaling} of the main text. GD is run with setup as described in Section \ref{sec:setup}. }
    \label{fig:plateau_scaling_gd}
\end{figure}

\begin{figure}[H]
    \centering
    \includegraphics[width=0.48\linewidth]{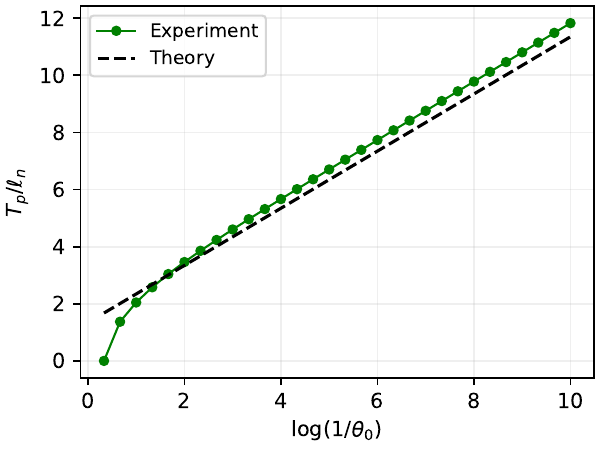}
    \includegraphics[width=0.48\linewidth]{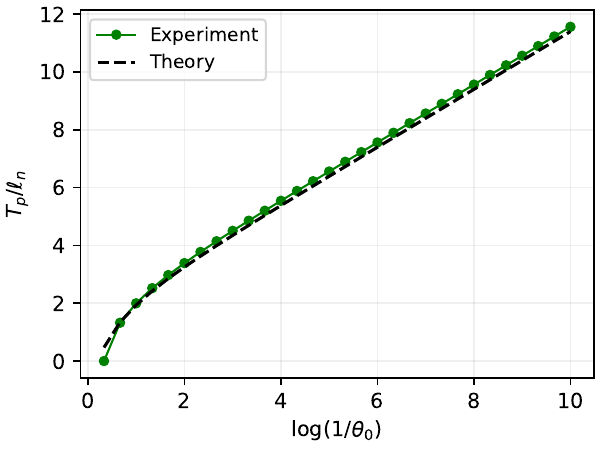}
    \caption{\textbf{Plateau length as a function of initialization scale.} For these experiments, the signal strength is set to $\varepsilon=10^{-2}$ and the weight decay is turned off ($\lambda=0$). In each case, the straight broken line (theory) represents $y=x+b$, where $x=\log1/\theta_0$ and the intercept $b$ is an explicit function of the target test accuracy (here $p=0.95$), but not on the initialization $\theta_0$, etc. The grokking time $T_p$ is measured empirically. The scale $\ell_n$ is as defined in \eqref{eq:biglimit}. \textbf{Left:} Under-parametrized regime $\gamma<1$. We use $n=400$ and $d=360$ as in Figure \ref{fig:plateau_scaling}. The theoretical curve is given by \eqref{eq:underparam-Tdelta}. \textbf{Right:} Near the hard interpolation edge $\gamma=1$. Here, we use $n=2000$ and $d=1800$. For the theory curve, we use \eqref{eq:edge-Tdelta}. As in \eqref{eq:underparam-Tdelta}, notice the perfect match between experiment and our theoretical predictions from Section \ref{sec:plateau-lengths}.}
    \label{fig:edgy}
\end{figure}

\begin{figure}[H]
    \centering
    \includegraphics[width=0.5\linewidth]{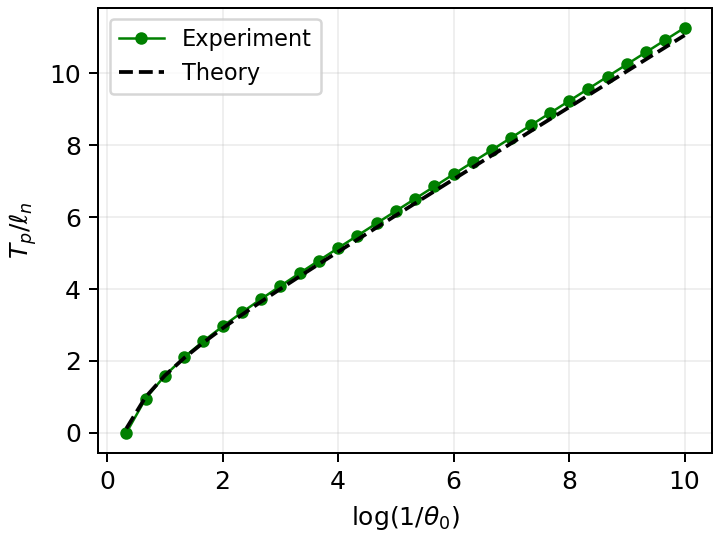}
    \caption{\textbf{Plateau length in under-parametrized regime $\gamma<1$.}
    Here $n=10000$ samples and $d=8000$, making $k=n-d$ larger.
    Signal strength is $\eps=10^{-2}$ and regularization is odd. Under the limit \eqref{eq:biglimit}, the measured normalized grokking time is close to the theory as expected.}
    \label{fig:plateau_n=10000}
\end{figure}

\end{document}